\documentclass[sigconf,nonacm,10pt]{acmart}

\usepackage[english]{babel}
\usepackage{blindtext}
\usepackage{titlesec}
\titleformat{\section}
  {\normalfont\large\bfseries}{\thesection}{1em}{}
\titleformat{\subsection}
  {\normalfont\normalsize\bfseries}{\thesubsection}{1em}{}

\usepackage{color}
\usepackage{multirow}
\usepackage{url}
\usepackage{graphicx}
\usepackage{algorithmicx,algorithm}
\usepackage[noend]{algpseudocode}

\usepackage{enumitem} 
\usepackage{latexsym}
\usepackage{amsmath}
\usepackage{amsfonts}
\usepackage{tabularx}
\usepackage{balance}
\usepackage{subcaption}
\usepackage{caption}
\usepackage{enumitem} 
\usepackage{xspace}
\usepackage{booktabs}
\usepackage{tabularx}
\usepackage{balance}
\usepackage{graphicx}
\usepackage{svg}
\usepackage{pifont}
\usepackage[font=footnotesize]{caption}
\svgsetup{
    inkscapepath=i/svg-inkscape/
}
\svgpath{{svg/}}
\usepackage{etoolbox}
\usepackage[all]{nowidow}

\setlist{nolistsep}

\newcommand{\parabf}[1]{\medskip\noindent\textbf{#1}}
\newcommand{\parait}[1]{\medskip\noindent\textit{#1}}
\newcommand{\paraf}[1]{\noindent\textbf{#1}}
\newcommand{\cut}[1]{}

\newcommand{\sysname}{\textsc{Heddle}\xspace}

\usepackage{array}

\renewcommand\footnotetextcopyrightpermission[1]{} 
\acmConference{}{}{}
\acmBooktitle{}

\newtheorem{lemma}{Lemma}[section]

\begin{document}
\sloppy
\date{}
\title{
    \fontsize{15}{20}\selectfont
    \sysname: A Distributed Orchestration System for Agentic RL Rollout}

\author{
    \fontsize{11}{13}\selectfont
    \vspace{-0.0in}
    Zili Zhang$^{\ast}$\qquad Yinmin Zhong$^{\ast}$\qquad Chengxu Yang$^{\dagger}$\qquad Chao Jin$^{\ast}$\qquad \\
    \vspace{0.1in}
    Bingyang Wu$^{\ast}$\qquad Xinming Wei$^{\ast}$\qquad Yuliang Liu$^{\dagger}$\qquad Xin Jin$^{\ast}$\\
    \vspace{0.12in}
    $^{\ast}$\textit{School of Computer Science, Peking University} \qquad $^{\dagger}$\textit{Independent Researcher}\\
    \vspace{0.2in}
} 

\renewcommand{\shortauthors}{}
\renewcommand{\shorttitle}{}
\begin{sloppypar}

\begin{abstract}
Agentic Reinforcement Learning (RL) enables LLMs
to solve complex tasks by alternating between a data-collection rollout phase and a policy training phase.
During rollout, the agent generates trajectories, i.e., multi-step interactions between LLMs and external tools.
Yet, frequent tool calls induce long-tailed trajectory generation that bottlenecks rollouts.
This stems from \emph{step-centric} designs that ignore trajectory context,
triggering three system problems for long-tail trajectory generation: queueing delays, interference overhead, and inflated per-token time.
We propose \sysname, a \emph{trajectory-centric} system to optimize the \emph{when}, \emph{where}, and \emph{how} of agentic rollout execution.
\sysname integrates three core mechanisms: \ding{182} trajectory-level scheduling using
runtime prediction and progressive priority to minimize cumulative queueing;
\ding{183} trajectory-aware placement via presorted dynamic programming and opportunistic
migration during idle tool-call intervals to minimize interference; and \ding{184}
trajectory-adaptive resource manager that dynamically tunes model parallelism
to accelerate the per-token time of long-tail trajectories while maintaining high throughput for
short trajectories. Evaluations across diverse agentic RL workloads demonstrate that
\sysname effectively neutralizes the long-tail bottleneck, achieving up to 2.5$\times$ higher end-to-end rollout
throughput compared to state-of-the-art baselines. 
\end{abstract}

\maketitle

\section{Introduction}
\label{sec:introduction}

Agentic Reinforcement Learning (RL)~\cite{yao2022react,jin2025search,nakano2021webgpt,shinn2023reflexion,wang2023voyager}
is characterized by an iterative cycle of rollout (i.e., data collection) and training (i.e., policy optimization).
Moving beyond basic alignment~\cite{christiano2017deep,ouyang2022training} and static reasoning~\cite{guo2025deepseek, yu2025dapo}, agentic RL
enables LLMs to solve complex tasks through iterative, multi-step rollouts with external tool usage. This autonomy allows LLM agents to
navigate dynamic environments and continuously refine their strategies.
This paradigm has achieved notable success in frontier industry
products such as Claude Code~\cite{anthropic2025claudecode}, Deep Research~\cite{openai2025deepresearch}, and OpenClaw~\cite{openclaw}.

Despite the promise of agentic RL, its training efficiency is severely constrained by rollout phase. Unlike traditional LLM training that consumes static data,
agentic RL relies on online interactions where agents generate interleaved reasoning contexts and real-time tool execution
(i.e., multi-step agentic trajectories), as shown in Figure~\ref{fig:background:trajectory}.
Empirical analysis~\cite{qin2025seer, hu2025taming} identifies rollout generation as the dominant bottleneck,
consuming over 80\% of the entire training time.
This inefficiency is primarily driven by the severe straggler effect of rollout phase:
a long-tail distribution of generated agentic trajectories where a small fraction of complex, multi-step interactions
significantly prolongs the rollout makespan.
As shown in Figure~\ref{fig:background:distribution}, profiling Qwen3~\cite{yang2025qwen3} coding agentic rollouts on the CodeForces~\cite{li2022competition} dataset reveals that
both the number of generated tokens and the tool execution
latency are highly skewed. Consequently, as shown in Figure~\ref{fig:background:trajectory}, the majority of computational resources remain
idle waiting for these stragglers (i.e., long-tail trajectories) to complete, leading
to severe resource underutilization.

\begin{figure}[t!]
    \centering
    \includegraphics[width=\linewidth]{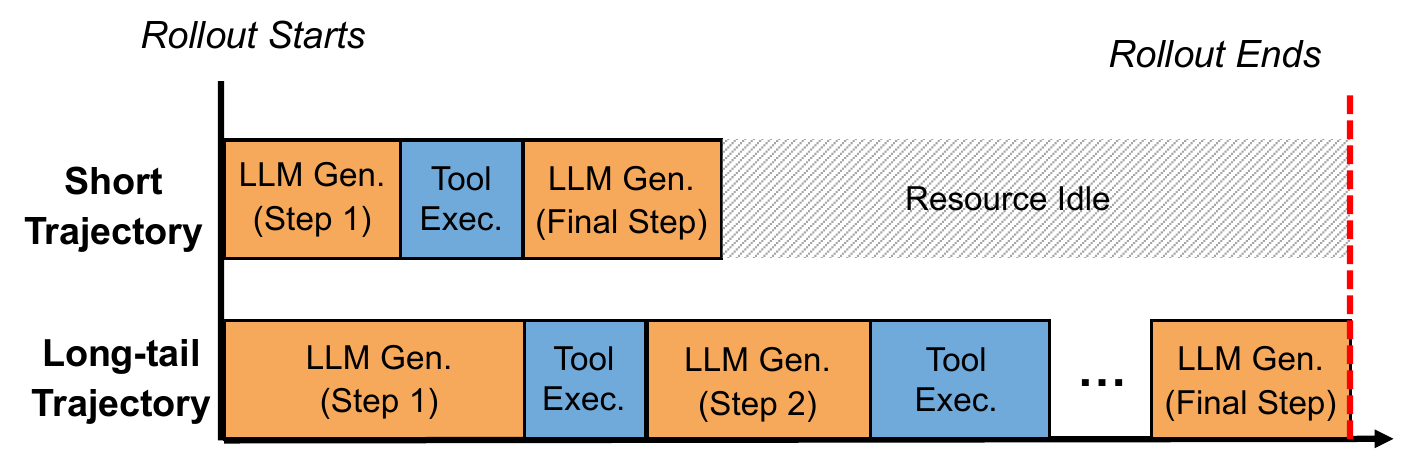}
    \vspace{-0.29in}
    \caption{Illustration of the agentic trajectory and the straggler effect.}
    \vspace{-0.26in}
    \label{fig:background:trajectory}
\end{figure}

To rigorously deconstruct this rollout bottleneck, we formulate the global rollout
makespan as being dictated by the slowest trajectory in the batch $\mathcal{T}$:
\begin{equation}
    T = \max_{i \in \mathcal{T}} \left( T_{\text{queue}}^{(i)} + N_{\text{tokens}}^{(i)} \cdot T \cdot \alpha^{(i)} + T_{\text{tool}}^{(i)} \right)
    \label{eq:makespan}
\end{equation}
While token count $N_{\text{tokens}}$ is algorithmically intrinsic and tool latency $T_{\text{tool}}$
is addressed by elastic serverless infrastructure~\cite{wang2018peeking,zhang2024jolteon},
the optimization burden falls on GPU-centric factors.
This formulation isolates three tractable components: queueing delay ($T_{\text{queue}}$),
the contention-driven interference coefficient ($\alpha$), and average
base per-token time ($T$).
These map to three core orchestration decisions:
\emph{when} to schedule the long-tail trajectory (minimize its queueing),
\emph{where} to place the long-tail trajectory (mitigate its compute and memory contention),
and \emph{how} to allocate resources for the long-tail trajectory (reduce its per-token time).

For scheduling, as shown in Figure~\ref{fig:background:framework},
existing agentic RL frameworks~\cite{verl,slime_github,zhong2025optimizing,zhong2025streamrl}
are fundamentally \emph{step-centric}, treating agentic steps as isolated requests.
This trajectory-agnostic design strips away critical metadata (e.g., ID, step index, length)
required for trajectory orchestration. Consequently, scheduling degenerates into a de facto
round-robin policy where multi-step trajectories repeatedly re-queue, inflicting severe
delays on long-tailed trajectories. For placement, existing
frameworks typically adopt either cache-affinity or least-load strategies.
Static cache-affinity triggers load imbalance due to lack of runtime migration,
while least-load policies incur prohibitive recomputation overhead and exacerbate interference for long-tailed trajectories
by redistributing trajectories per step.
Finally, for resource allocation, rigid and homogeneous GPU allocation fails to balance the high-throughput needs of
short trajectories with the low-latency requirements of long-tailed trajectories.
Ultimately, these isolated design choices compound to cause severe resource underutilization
during agentic RL rollout.

To this end, we propose \sysname, a distributed framework that rearchitects agentic RL rollout.
Moving beyond the step-centric design, \sysname adopts a \textit{trajectory-centric} design
to directly mitigate the execution stragglers.
Specifically, we introduce three synergistic techniques, e.g., trajectory-level scheduling,
trajectory-aware placement, and trajectory-adaptive resource management, to optimize the \textit{when},
\textit{where}, and \textit{how} of computation, respectively.

First, \emph{trajectory-level scheduling} dictates the scheduling (\emph{when}) of execution through
\emph{progressive priority scheduling} (\S\ref{sec:design:scheduler:progressive-priority-scheduling}). To bypass the pitfalls of round-robin,
\sysname assigns immediate execution precedence to long-tailed trajectories.
Since predicting trajectory length becomes more accurate as interactions unfold,
\sysname eschews static prioritization. Instead, it employs a runtime predictor (\S\ref{sec:design:scheduler:progressive-trajectory-prediction}) to
continuously refine length estimates after each step,
dynamically escalating the priority of long-tailed trajectories.
This progressive adjustment precisely accelerates true stragglers,
effectively minimizing their cumulative queueing delay ($T_{\text{queue}}$).

\begin{figure}[t!]
    \centering
    \includegraphics[width=\linewidth]{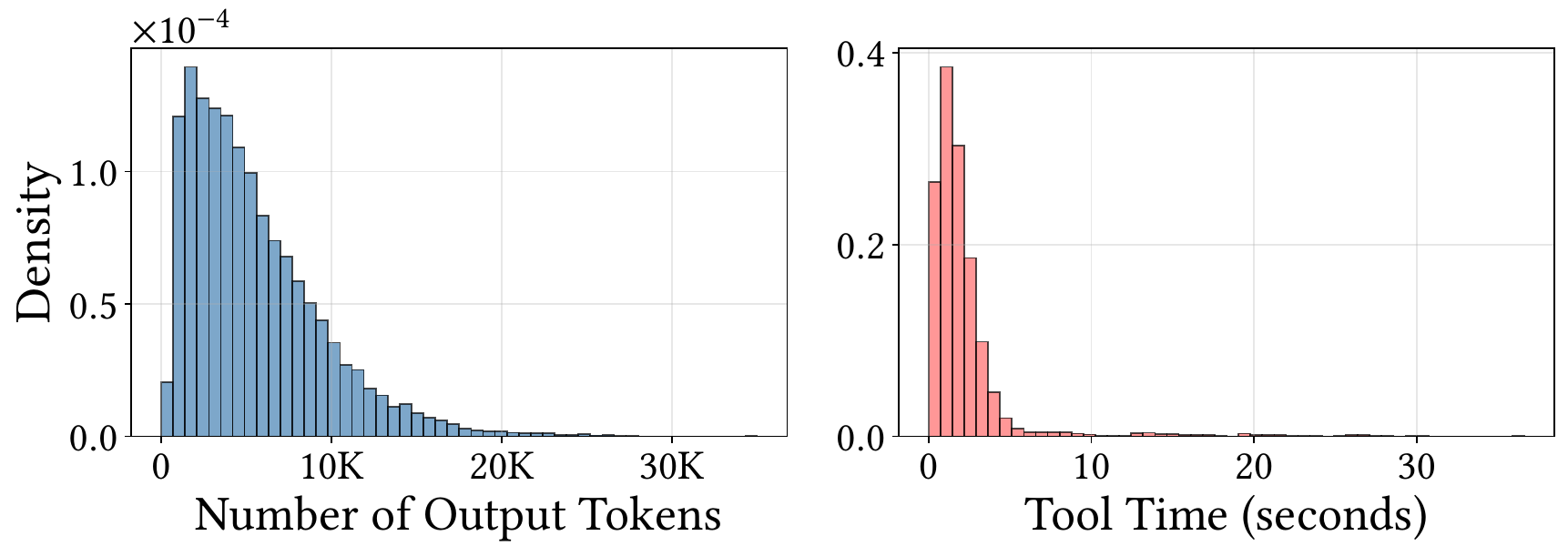}
    \vspace{-0.3in}
    \caption{Long-tailed distribution of coding agents.}
    \vspace{-0.26in}
    \label{fig:background:distribution}
\end{figure}

Second, \emph{trajectory-aware placement} dictates the spatial distribution (\emph{where})
of workloads via a two-phase strategy: \emph{presorted dynamic programming} (\S\ref{sec:design:placement:presorted-dp})
coupled with \emph{opportunistic runtime migration} (\S\ref{sec:design:placement:trajectory-migration}).
During initial dispatch, presorted dynamic programming sorts trajectories by predicted length,
computing an \emph{optimal} placement that minimizes the interference coefficient ($\alpha$)
of long-tailed trajectories.
Since prediction accuracy improves over time, \sysname incorporates runtime trajectory migration
to rectify initial placement deviations by dynamically migrating trajectory contexts and prefix caches.
Crucially, \sysname masks this migration overhead by transmitting data asynchronously
during tool-call intervals, keeping the critical execution path unblocked.
Ultimately, this dual approach minimizes the interference of long-tailed trajectories while preserving high data locality.

Third, \emph{trajectory-adaptive resource management} optimizes the resource allocation
problem (\emph{how}) to match specific trajectory characteristics. By breaking the rigid
constraint of homogeneous worker configurations, \sysname assigns distinct parallelism
strategies to different trajectories. It allocates low-latency, high-parallelism resources to
accelerate the base per-token time ($T$) of long-tail trajectories,
while assigning high-throughput, low-parallelism configurations to short trajectories.
To facilitate efficient online provisioning, we employ \emph{sort-initialized simulated annealing} (\S\ref{sec:design:resource-manager:simulated-annealing})
that rapidly converges on a near-optimal parallelism mapping for each trajectory.
This adaptive provisioning ensures that the execution of stragglers is
accelerated without compromising the aggregate system throughput for short trajectories.

In summary, we make the following contributions.
\begin{itemize}[leftmargin=*]
    \item We rigorously formulate the rollout makespan to deconstruct the system bottleneck.
    Our analysis isolates three critical performance bottlenecks for long-tail trajectories:
    \emph{queueing delay}, \emph{interference overhead}, and \emph{per-token time}.

    \item We propose \sysname, a distributed agentic RL system with a trajectory-centric design.
    By integrating trajectory-level scheduling, trajectory-aware placement, and
    trajectory-adaptive resource management, \sysname systematically optimizes \emph{when}, \emph{where}, and
    \emph{how} computation occurs, effectively neutralizing the long-tail bottleneck.

    \item We implement \sysname and evaluate it across diverse agentic RL workloads.
    The experimental results demonstrate that \sysname achieves up to 2.5$\times$
    higher throughput compared to state-of-the-art baselines.
\end{itemize}

\begin{figure}[t!]
    \centering
    \includegraphics[width=0.92\linewidth]{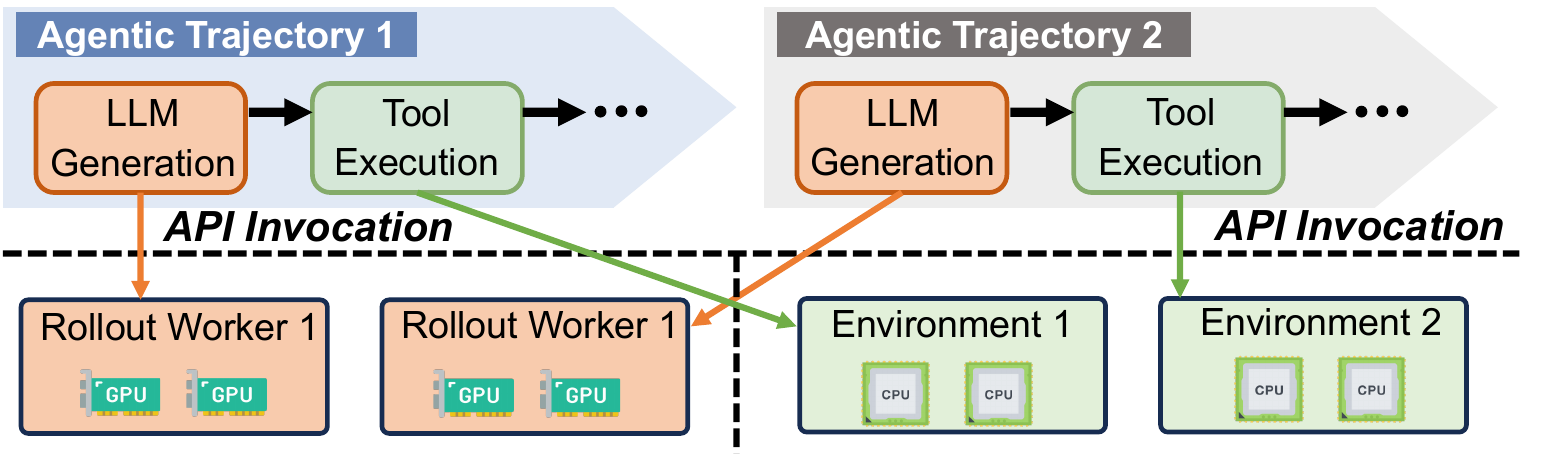}
    \vspace{-0.1in}
    \caption{Existing agentic RL framework.}
    \vspace{-0.2in}
    \label{fig:background:framework}
\end{figure}

\section{Background and Motivation}
\label{sec:background}
In this section, we introduce the background of agentic RL
and conduct a comprehensive analysis of the rollout bottleneck.
Then, we summarize the systematic challenges, which motivate the design of \sysname.

\subsection{Agentic RL}
\label{sec:background:agentic-rl}

Agentic Reinforcement Learning (agentic RL)~\cite{yao2022react,jin2025search,nakano2021webgpt,shinn2023reflexion,wang2023voyager} elevates LLMs into autonomous
agents capable of dynamic environmental interaction.
Departing from static reasoning tasks~\cite{guo2025deepseek, yu2025dapo, seed-thinking}, agentic RL empowers LLMs to
synthesize high-level plans and execute decisions within dynamic
environments. In this paradigm, during rollout,
an LLM agent generates multi-step trajectories: at each step, it generates logical reasoning,
invokes an external tool, and observes the resulting state transition to calibrate its next action.
This iterative loop continues until a terminal state is reached.
In the training phase, the accumulated interaction trajectories are used to optimize the agent's policy via
algorithms like PPO~\cite{schulman2017proximal} or GRPO~\cite{shao2024deepseekmath}, effectively grounding the
LLM's high-level planning in real-world execution.

For instance, a coding agent~\cite{yang2024swe,wang2024openhands,dai2026cuda} processing a high-level programming requirement generates a
multi-step trajectory. In its initial step,
the agent synthesizes a plan. Following the plan, it then retrieves relevant codebase context,
generates candidate snippets, and triggers validation tests. It then integrates
the resulting feedback into its context to iteratively debug errors across
subsequent steps until all tests pass.
As Figure~\ref{fig:background:trajectory} depicts, this looping process
transforms a standard single-shot inference request into a long-running agentic trajectory
characterized by the complex interleaving of LLM generation and external tool execution.

Beyond traditional chatbots, agentic RL underpins production-grade systems like
Claude Code~\cite{anthropic2025claudecode}, Deep Research~\cite{openai2025deepresearch}, and OpenClaw~\cite{openclaw} that tackle intricate tasks via long-horizon tool interactions.
As agentic RL integrates into LLM development pipelines,
rollout efficiency becomes a critical bottleneck. The ability to
sustain high-throughput trajectory generation
governs system efficiency and dictates the final model capabilities under realistic resource constraints.

\subsection{System Characterization}
\label{sec:background:system-characterization}
A standard RL training step comprises three phases:
(1) \emph{rollout} (trajectory generation), (2) \emph{inference} (reward and reference computation),
and (3) \emph{training} (policy model update).
As quantified in prior works~\cite{qin2025seer, hu2025taming}, the rollout phase dominates the RL training pipeline,
constituting a bottleneck fundamentally rooted in a pronounced \emph{straggler effect}.

\begin{figure}[t!]
    \centering
    \includegraphics[width=0.75\linewidth]{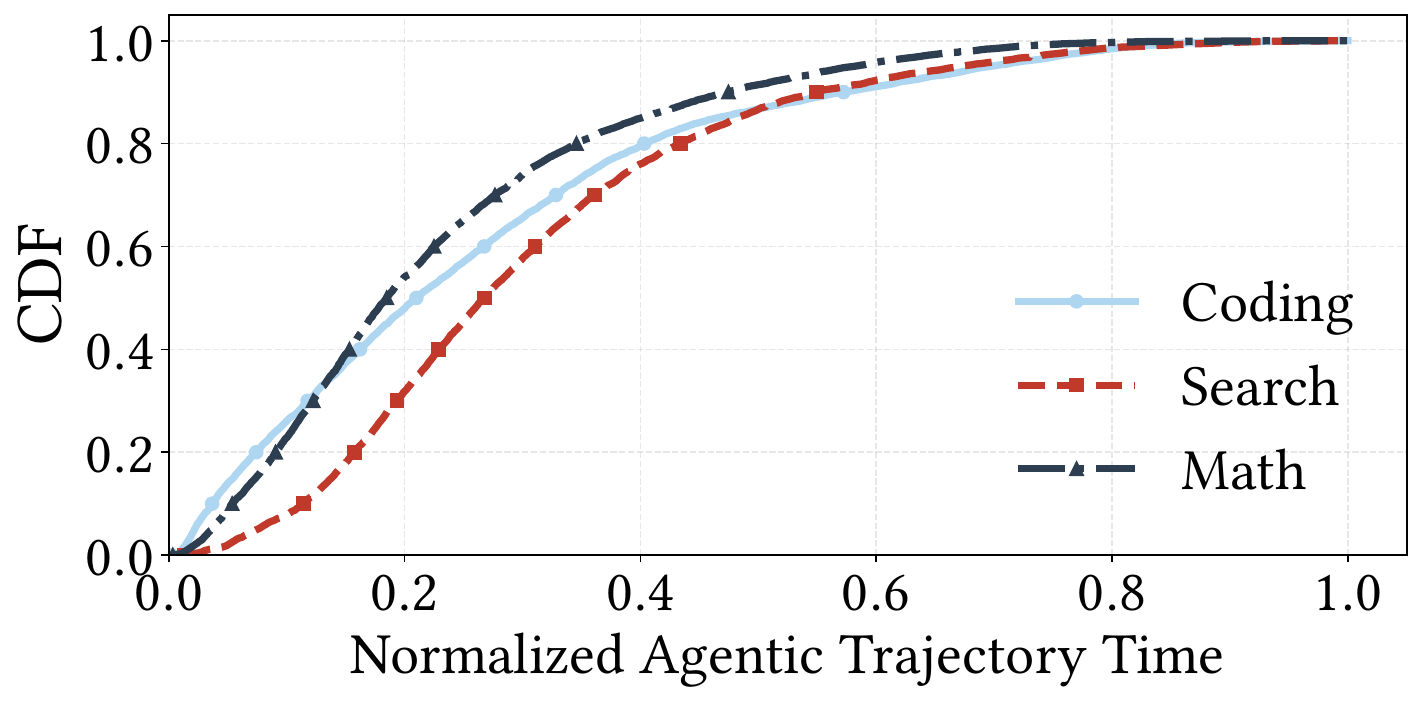}
    \vspace{-0.12in}
    \caption{CDF of normalized agentic trajectory completion time.}
    \vspace{-0.22in}
    \label{fig:background:time_cdf}
\end{figure}

This straggler effect stems from the inherent long-tailed distributions of agentic trajectories.
As Figure~\ref{fig:background:distribution} illustrates for a representative coding agent task,
both token generation and tool execution times are highly skewed. While most trajectories are
computationally light and terminate quickly, a sparse subset demands extensive multi-step
reasoning and prolonged tool interactions. In synchronous frameworks, these outliers become
dominant stragglers, forcing cluster-wide idleness that disproportionately inflates makespan
and degrades RL training throughput.

Ultimately, the rollout makespan is dictated by this critical, longest trajectory.
To quantify this, Figure~\ref{fig:background:time_cdf} profiles trajectory completion times during agentic RL rollout
using Verl~\cite{verl} and SGLang~\cite{zheng2024sglang}, normalized against the maximum time to control for task variance.
This profiling reveals a severe latency tail, where the maximum completion time exceeds
the median by over $4\times$. Because token counts ($N_{\text{tokens}}$) are algorithmically invariant
and tool execution ($T_{\text{tool}}$) is offloaded to elastic serverless infrastructure~\cite{amazon2014lambda,aliyun_fc,zhang2024jolteon,jin2023ditto},
our formulation isolates three GPU-centric targets per Formula~\ref{eq:makespan}:
queueing delay ($T_{\text{queue}}$), contention-induced interference overhead
($N_{\text{tokens}}\cdot(\alpha-1)\cdot T$), and base per-token time
($N_{\text{tokens}}\cdot T$).
Base per-token time is the contention-free time of average token
generation at batch size one.

\begin{figure}[t!]
    \centering
    \includegraphics[width=0.95\linewidth]{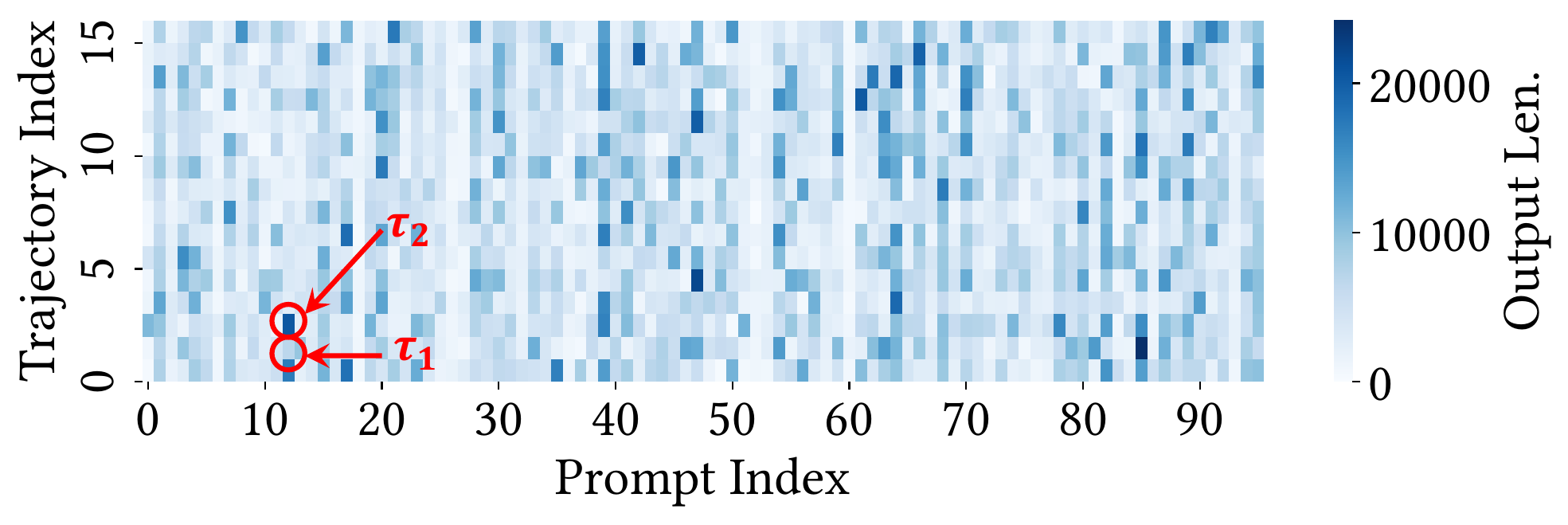}
    \vspace{-0.18in}
    \caption{Trajectory length distribution across different prompts.}
    \vspace{-0.22in}
    \label{fig:background:heatmap}
\end{figure}

\subsection{Challenges}
\label{sec:background:challenges}
Based on the system characterization, we identify three key challenges in agentic RL rollout optimization:
scheduling problem to minimize queueing delay, placement problem to minimize interference overhead, and
resource allocation problem to minimize base per-token time.

\parabf{Scheduling.}
As shown in Figure~\ref{fig:background:framework}, existing frameworks~\cite{verl,slime_github}
decouple LLM generation from tool execution, treating steps as stateless requests.
Consequently, scheduling defaults to round-robin~\cite{harchol2013performance} policy that
ignores the agentic trajectories, forcing long-tailed trajectories to accumulate excessive
queueing delays across multiple steps. While duration-based priority scheduling is a potential remedy,
the high stochasticity of agentic rollouts renders static prediction ineffective.
As illustrated in Figure~\ref{fig:background:heatmap}, identical prompts often
yield highly divergent trajectory lengths due to dynamic environment feedback.
For instance, two trajectories ($\tau_1$ and $\tau_2$) sharing a prompt might generate similar initial code;
however, if $\tau_2$ fails the example test, it triggers multiple rectification
steps, drastically extending its trajectory.
This unpredictability makes minimizing queueing delay a fundamental challenge.

\parabf{Placement.}
Existing frameworks~\cite{verl,slime_github} rely on cache-affinity or least-load placement,
neither of which mitigates the straggler effect. \emph{Cache-affinity} statically binds trajectories
to specific rollout workers to maximize prefix cache hits.
However, unknown trajectory durations inevitably trigger severe load imbalance
where some workers idle prematurely while others stall on long-tailed trajectories.
Conversely, \emph{least-load} balancing redistributes trajectories per-step,
incurring prohibitive cache recomputation and exacerbating the interference coefficient ($\alpha$)
of long-tailed trajectories.
As shown in Figure~\ref{fig:background:interference},
equalizing worker loads inadvertently co-locates long-tailed trajectories
with numerous short ones, forcing long-tailed trajectories to execute at high batch sizes
that inflate per-token time through increased memory and computation contention.
Ultimately, existing placement methods fail to resolve the straggler effect because they lack a global,
trajectory-aware perspective of the agentic RL rollout.

\begin{figure}[t!]
    \centering
    \includegraphics[width=0.65\linewidth]{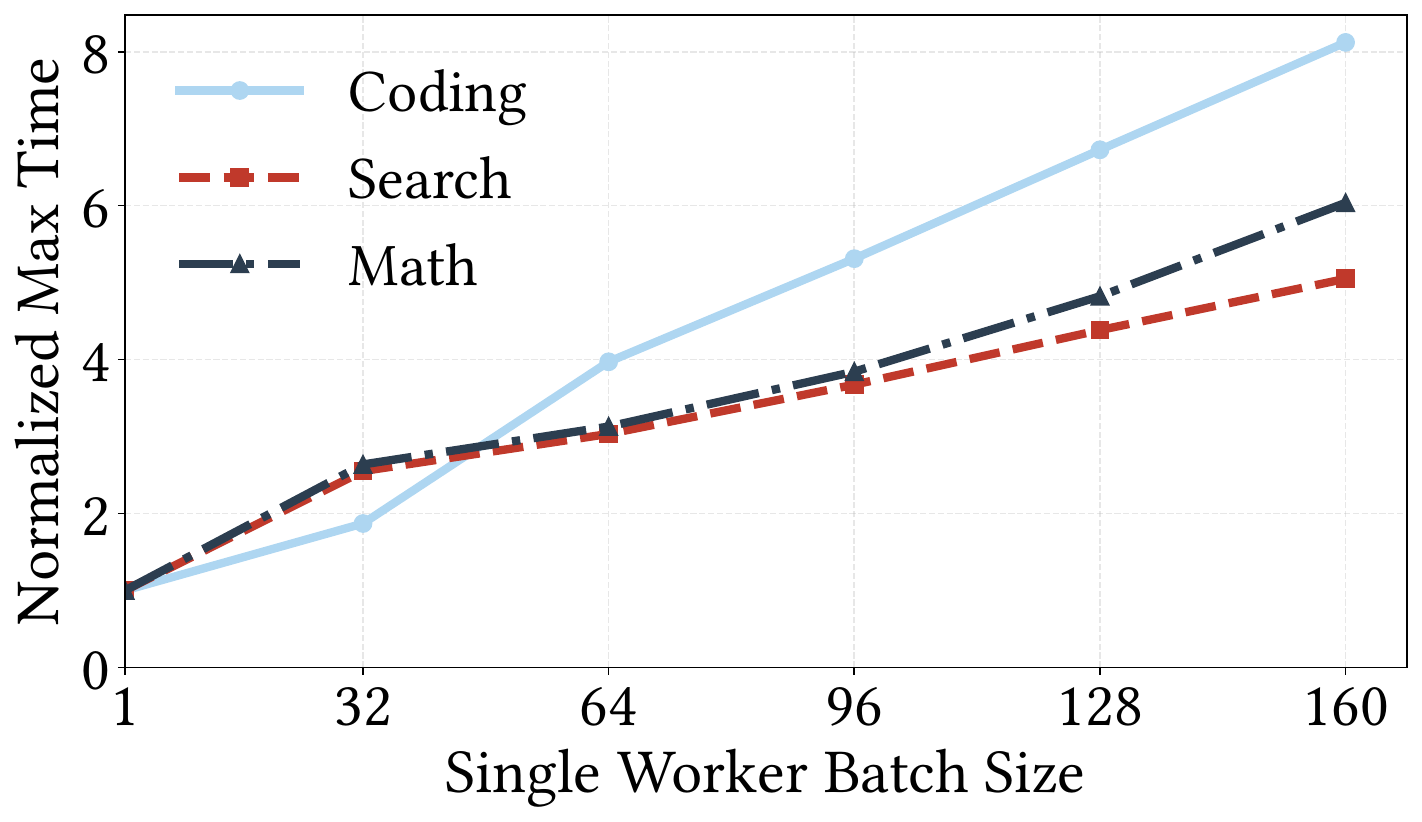}
    \vspace{-0.2in}
    \caption{Interference of long-tailed trajectories.}
    \vspace{-0.2in}
    \label{fig:background:interference}
\end{figure}

\parabf{Resource Management.}
Existing frameworks~\cite{verl,slime_github} enforce rigid, homogeneous GPU configurations across all rollout workers,
ignoring the inherent heterogeneity of agentic trajectories. Specifically, the large volume of short trajectories is
throughput-bound and benefits from lower model parallelism (MP) to minimize relative communication overhead.
Conversely, long-tailed trajectories are latency-bound and require higher MP to reduce per-token time.
As shown in Figure~\ref{fig:background:tp_sizes} (where $4\times2$ denotes four workers with two GPUs each),
there is a latency-throughput trade-off: scaling data parallelism maximizes throughput but sacrifices model parallelism,
severely inflating per-token time.
Consequently, as shown in Figure~\ref{fig:design:resource-manager:insight}(a),
each worker processes one agentic trajectory.
Under homogeneous regimes,
under-provisioned long-tail trajectories incur elevated per-token time.
This mismatch necessitates trajectory-adaptive resource management that dynamically tunes MP degrees to align GPU resources with specific workload demands.

\section{\sysname Overview}
\label{sec:overview}

We propose \sysname, a distributed system that mitigates agentic RL stragglers via a decoupled design
(Figure~\ref{fig:overview:overview}). It separates global orchestration from local execution: a
\textbf{control plane} dictates the \emph{when}, \emph{where}, and \emph{how}
of trajectory execution, while a \textbf{data plane} handles the underlying runtime.

\parabf{Control Plane.}
The control plane serves as the centralized brain of \sysname, maintaining a global view of
cluster resources and agentic trajectory states. It is composed of three synergistic modules that collectively
optimize the rollout.

\parait{\underline{Trajectory-level Scheduler (\emph{when}).}}
The scheduler uses a trainable runtime predictor
that fuses static prompt analysis with dynamic runtime context to estimate trajectory length.
Since prediction fidelity improves monotonically as trajectory context accumulates,
the scheduler adopts \emph{progressive priority scheduling}.
By iteratively refining length estimates, it escalates the priority of
long-tailed trajectories, granting them execution precedence to minimize
cumulative queueing delay ($T_{\text{queue}}$).

\begin{figure}[t!]
    \centering
    \includegraphics[width=0.73\linewidth]{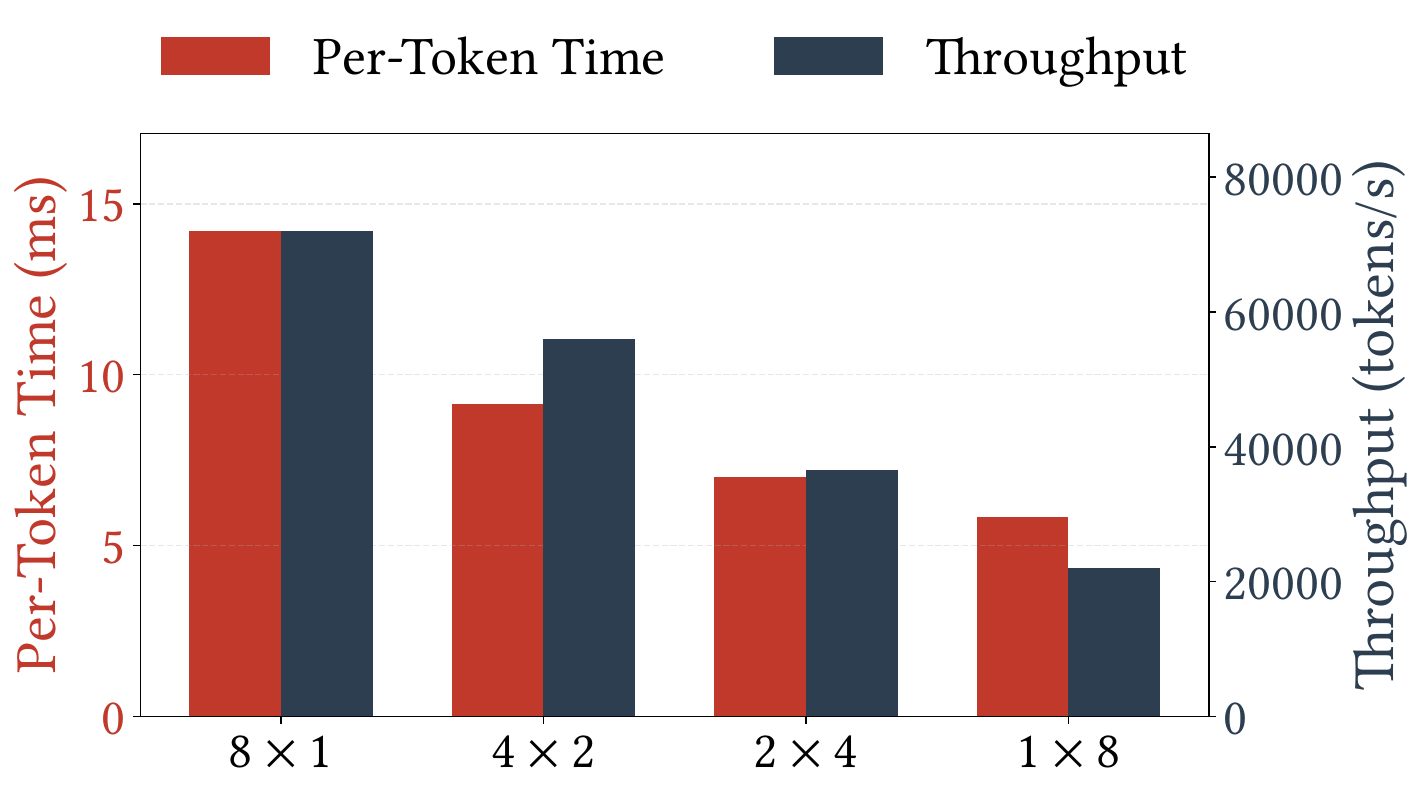}
    \vspace{-0.2in}
    \caption{Performance under different resource allocation strategies.}
    \vspace{-0.26in}
    \label{fig:background:tp_sizes}
\end{figure}

\parait{\underline{Trajectory-aware Placement (\emph{where}).}}
This component maps agentic trajectories to rollout workers with a two-phase strategy.
Initially, \emph{presorted dynamic programming}
spatially segregates long-tailed trajectories from short ones,
minimizing the interference coefficient ($\alpha$) of long-tailed trajectories.
At runtime, the engine monitors for load divergence
caused by evolving prediction accuracy (e.g., initially misclassified long-tailed trajectories).
Upon detection, it triggers \emph{opportunistic trajectory migration},
instructing the data plane to migrate trajectories during non-blocking tool
execution intervals.

\parait{\underline{Resource Manager (\emph{how}).}}
This component replaces homogeneous provisioning with a
\emph{trajectory-adaptive resource allocation} plan tailored to the
specific parallelism requirements of each agentic trajectory. It assigns high-degree model parallelism (MP)
to latency-sensitive long-tailed trajectories to accelerate per-token time ($T$),
while utilizing lower MP degrees for throughput-oriented short trajectories.

\parait{\underline{Tool Manager.}}
The tool manager orchestrates tool invocations via an elastic serverless backend,
eliminating cluster management overhead. By leveraging FaaS optimizations~\cite{mahgoub2022orion,zhang2024jolteon},
it effectively mitigates cold-start latencies and absorbs transient computational bursts during agentic rollout.
The pay-as-you-go billing model reduces the total cost of ownership compared to over-provisioned static resources.

\parabf{Data Plane.}
The data plane comprises a cluster of adaptive rollout workers that execute
intensive agentic workloads. Upon receiving control plane
directives, these workers materialize heterogeneous workers to orchestrate the interleaved
execution of model reasoning and tool execution.

\parait{\underline{Opportunistic State Migration.}}
The data plane masks migration overhead by exploiting the natural
boundaries between model reasoning and tool execution.
When a trajectory triggers a tool call, the worker yields its GPU resources.
\sysname leverages this idle interval to execute \emph{trajectory migration},
transferring the KV cache to a target worker designated by the control plane
without pausing the critical execution path.

\begin{figure}[t!]
    \centering
    \includegraphics[width=\linewidth]{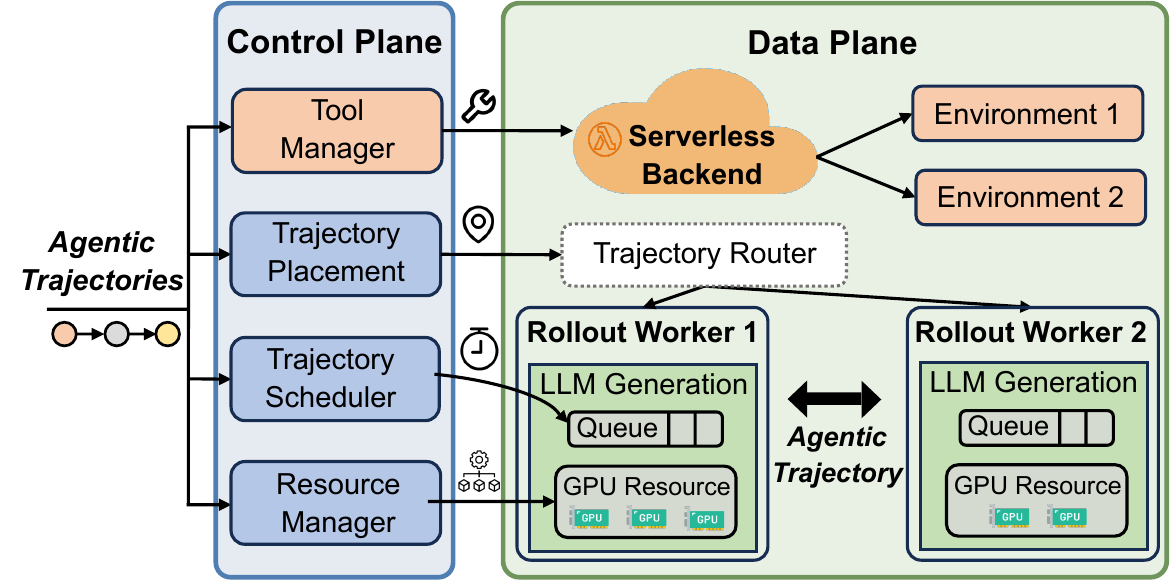}
    \vspace{-0.24in}
    \caption{Overview of \sysname.}
    \vspace{-0.2in}
    \label{fig:overview:overview}
\end{figure}

\parait{\underline{Runtime Telemetry.}}
The data plane closes the feedback loop by streaming real-time execution metrics, such
as current trajectory context, tool execution latency, and cache usage, back
to the control plane. This telemetry allows the runtime trajectory predictor to
correct its estimates and enables the scheduler to refine priorities for subsequent steps.

\section{Trajectory-level Scheduler}
\label{sec:design:scheduler}

To mitigate the queuing delay of long-tail trajectories in Formula~\ref{eq:makespan}, we propose \emph{trajectory-level scheduling}, shifting agentic
rollout from a step-centric to a trajectory-centric paradigm. As this approach relies on
trajectory identification, we first introduce \emph{progressive trajectory prediction} to dynamically identify potential
long-tail trajectories at runtime.

\subsection{Progressive trajectory Prediction}
\label{sec:design:scheduler:progressive-trajectory-prediction}

\parabf{Problem.}
Conventional prompt analysis~\cite{zhong2025streamrl,qin2025seer,he2025history} relies on static mechanisms that
estimate agentic trajectory length \emph{a priori} using historical data.
However, these methods fail to capture the dynamic, multi-step nature of agentic rollouts.
In RL algorithms like PPO~\cite{schulman2017proximal} and GRPO~\cite{shao2024deepseekmath}, a single prompt spawns a
group of agentic trajectory samples for advantage estimation.
To encourage exploration, high sampling temperatures are often employed,
inherently inducing high output variance.
Furthermore, trajectory length is also dictated by dynamic environmental feedback.
For instance, in a coding agent, identical prompts can yield divergent trajectories:
$\tau_1$ may pass example test cases in one step, while $\tau_2$ requires multiple
rectification steps, significantly extending $\tau_2$'s lifespan.
Thus, as Figure~\ref{fig:background:heatmap} shows,
trajectories within the same group exhibit significant length divergence
(i.e., intra-group variance). This runtime stochasticity renders static prediction ineffective for precise agentic trajectory prediction.

\parabf{Methodology.}
We propose \emph{progressive trajectory prediction} to capitalize on the iterative nature of agentic interaction.
As shown in Figure~\ref{fig:design:scheduler:progressive-trajectory-prediction},
the predictor monotonically refines length estimates as LLM generations and tool outputs enrich the agentic trajectory's context.
Notably, the initial step's execution plan serves as a strong semantic indicator,
anchoring the prediction with an accurate early-stage estimate.
To train the predictor, we harvest historical trajectories and
decompose them into \texttt{(context, remaining\_length)} tuples.
We leverage these data to fine-tune a lightweight pre-trained regression model (i.e., Qwen-0.6B).
At runtime, \sysname invokes this model after each step to update estimates and progressively reduce uncertainty.

We address potential overhead concerns through efficient model design.
First, training cost is trivial. The lightweight regression model requires only minutes to converge.
Second, deployed as a microservice, the model incurs negligible inference latency due to its compact
parameter size. Crucially, as shown in Figure~\ref{fig:design:scheduler:progressive-trajectory-prediction},
the prediction is performed \emph{asynchronously} alongside tool execution. This parallelism effectively
masks the inference cost, ensuring zero additional overhead on the agentic trajectory's critical path.

\begin{figure}[t!]
    \centering
    \includegraphics[width=0.95\linewidth]{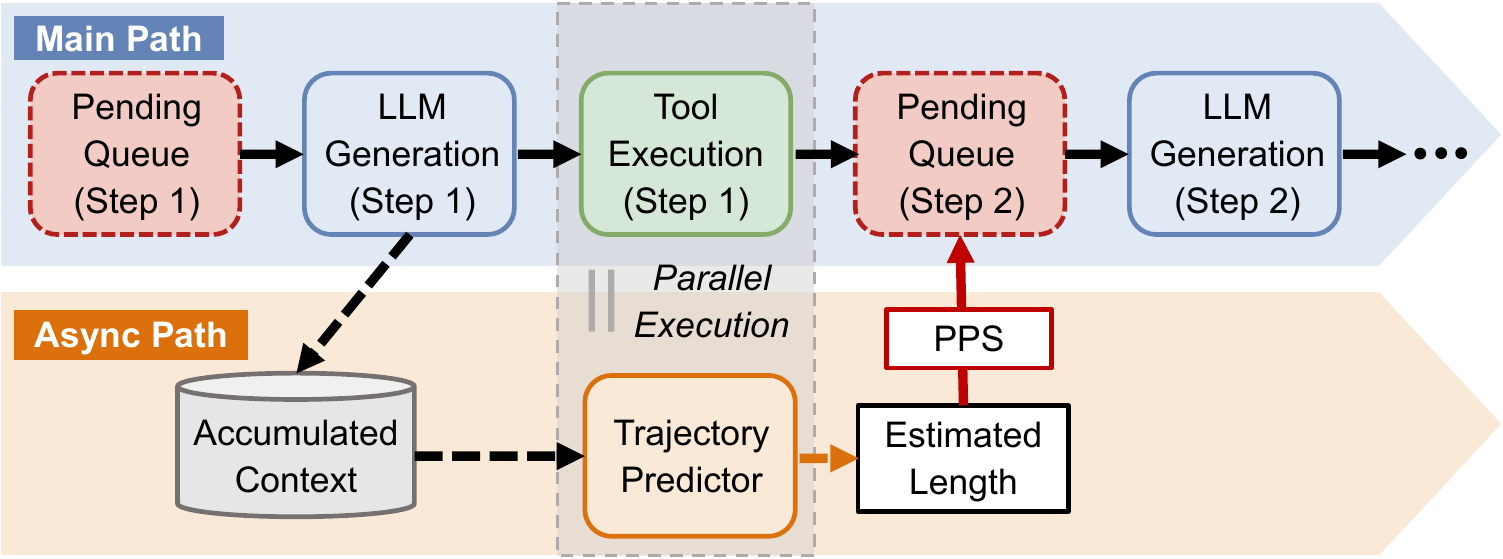}
    \vspace{-0.1in}
    \caption{Progressive trajectory prediction.}
    \vspace{-0.2in}
    \label{fig:design:scheduler:progressive-trajectory-prediction}
\end{figure}

\subsection{Progressive Priority Scheduling}
\label{sec:design:scheduler:progressive-priority-scheduling}
\paraf{Problem.}
As shown in Figure~\ref{fig:background:framework}, existing agentic RL frameworks~\cite{verl,slime_github} are architected for stateless interactions, typically decoupling
LLM generation from tool execution. In this paradigm, the system views an agentic trajectory not as a continuous
lifecycle, but as a fragmented sequence of independent requests. Consequently, scheduling defaults
to a trajectory-agnostic round-robin policy, where the execution quantum is rigidly limited to a
single step regardless of the trajectory's accumulated progress.
Specifically, every time a trajectory returns from a tool execution, it is treated as a de novo LLM generation request and
relegated to the tail of the waiting queue.
For a long-tail trajectory requiring $M$ steps, this imposes a recurring queuing penalty across $M$ distinct scheduling rounds.
By disproportionately penalizing long-tail trajectories with compounding queuing delays,
this approach proves detrimental to the global rollout makespan.

\parabf{Methodology.}
To optimize rollout makespan, we use \emph{progressive priority scheduling} (PPS),
an adaptive approximation of the longest-processing-time-first (LPT) discipline~\cite{graham1969bounds}.
LPT minimizes batch makespan by strictly prioritizing long-duration tasks, thereby mitigating
the cluster-wide idleness caused by the trailing stragglers.
PPS operationalizes this principle by dynamically mapping the progressive trajectory prediction to scheduling priorities.
As execution unfolds and prediction fidelity improves, PPS reorders the LLM inference requests in the pending queue for each rollout worker.
This ensures that identified long-tail trajectories gain execution precedence over short trajectories.

\begin{algorithm}[t]
\caption{Progressive Priority Scheduling}
\label{alg:pps}
\small
\begin{algorithmic}[1]
\Require Pending queue $Q$, active set $A$, predictor $\mathcal{P}$
\Statex \textbf{// Invoked when trajectory $\tau$ returns from tool execution}
\Function{Schedule}{$\tau$}
    \State $\tau.\text{pred\_len} \gets \mathcal{P}(\tau.\text{context})$ \Comment{progressive prediction}
    \State $\tau.\text{priority} \gets \tau.\text{pred\_len}$ \Comment{longer $\Rightarrow$ higher priority}
    \State Insert $\tau$ into $Q$
    \State Sort $Q$ by descending priority
    \Statex \textbf{// Preemptive execution}
    \State $r_{\min} \gets \arg\min_{r \in A} r.\text{priority}$
    \If{$Q.\text{top}.\text{priority} > r_{\min}.\text{priority}$}
        \State Evict $r_{\min}$ from $A$; persist KV cache
        \State Move $r_{\min}$ to $Q$
        \State Promote $Q.\text{top}$ to $A$
    \EndIf
\EndFunction
\end{algorithmic}
\end{algorithm}

\parabf{Preemptive Execution.}
To rigorously enforce the LPT discipline, we integrate \emph{preemptive execution} into SGLang~\cite{zheng2024sglang}.
This mechanism extends prioritization beyond the waiting queue, enabling high-priority
pending generation requests to interrupt active low-priority requests.
Specifically, whenever a pending request outranks the lowest-priority active request,
the system preempts the active request with the lowest priority, relegating it to the waiting queue while persisting its
prefix cache. The high-priority request is immediately promoted to the newly vacated slot.
This guarantees immediate execution for high-priority requests, significantly minimizing the queuing latency of long-tail trajectories.
Algorithm~\ref{alg:pps} illustrates the detailed pseudocode.

\section{Trajectory-aware Placement}
\label{sec:design:placement}
To mitigate the interference factor of long-tailed trajectories in Formula~\ref{eq:makespan},
we propose \emph{trajectory-aware placement}.
We begin by formalizing the placement optimization problem.
Subsequently, we introduce our solution: \emph{optimal} presorted dynamic programming algorithm, augmented by runtime trajectory migration.

\subsection{Problem formulation}
\label{sec:design:placement:problem-formulation}
In agentic rollout, multiple LLM rollout workers are deployed to execute concurrent trajectories through parallel batching mechanisms.
However, this introduces inevitable interference for token computation due to GPU resource contention.
We assume that the average base per-token time (batch size $=1$) is a constant $T$ and let $L$ denote the trajectory length function.
Given $n$ agentic trajectories $\{\tau_1, \ldots, \tau_n\}$ and $m$ LLM rollout workers, we seek a partitioning strategy $\{g_1, \ldots, g_m\}$,
where $g_i$ contains a set of agentic trajectories $\{\tau_{i_1}, \ldots, \tau_{i_k}\}$ and
$g_i$ is assigned to the $i$-th worker. We define $F(g_i)$ as the interference factor for group $g_i$.
To minimize the global rollout makespan, we define the optimization objective as:
\begin{equation}
    \min_{\{g_1, \ldots, g_m\}} \max_{i=1}^{m} \left(F(g_i) \times \max_{j=1}^{k} L(\tau_{i_j}) \times T\right)
    \label{eq:placement:objective}
\end{equation}
The goal is to minimize the completion time across $m$ groups,
where the group execution time is the product of its interference factor and the duration of its longest trajectory.

This optimization problem is fundamentally NP-hard. In heterogeneous settings (i.e., resource configurations vary across workers),
the search space scales as $O(m^n)$; even in the homogeneous case, complexity is governed by
the Stirling number of the second kind, $S(n, m)$~\cite{stanley2011enumerative}. This combinatorial
explosion renders exact enumeration intractable. Furthermore, the interference function $F$
lacks a closed-form analytic expression, preventing the use of off-the-shelf solvers.
These barriers preclude exact resolution during the agentic RL rollout,
necessitating our specialized \emph{presorted dynamic programming} algorithm.

\subsection{Presorted Dynamic Programming}
\label{sec:design:placement:presorted-dp}
We ground our placement algorithm on some simplifying premises. First, we treat trajectory lengths as
known \emph{a priori}. Since the initial prediction remains subject to estimation errors (\S~\ref{sec:design:scheduler:progressive-trajectory-prediction}),
we address the prediction deviations via trajectory migration (\S~\ref{sec:design:placement:trajectory-migration}).
Second, we model the cluster as homogeneous workers with the same resource configurations.
We subsequently relax this constraint in \S~\ref{sec:design:resource-manager},
extending our solution to heterogeneous worker configurations.
Finally, we posit that the interference factor is a monotonically increasing function determined exclusively by the size of the agentic trajectory group,
a property that holds empirically for standard workloads.

\parabf{Insight.}
We derive a critical structural insight:
\begin{lemma}\label{lemma1}
    Given agentic trajectories in descending length order,
    there exists an optimal partitioning strategy where each group constitutes a
    contiguous subsequence of the sorted list.
\end{lemma}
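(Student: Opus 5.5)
We argue by an exchange argument on an arbitrary optimal partition, using the premises stated just before the lemma. Workers are homogeneous, and $F(g)$ depends only on $|g|$ and is monotonically non-decreasing. Write the sorted list as $\tau_1,\ldots,\tau_n$ with $L(\tau_1)\ge\cdots\ge L(\tau_n)$. The cost of a group is
\[
c(g)=F(|g|)\cdot \max_{\tau\in g}L(\tau)\cdot T .
\]
This cost depends only on the group's size and its longest member. So it suffices to show that, from any optimal partition $\{g_1,\ldots,g_m\}$, we can build a contiguous partition in which every group cost is at most the largest cost in the original.

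The construction keeps the multiset of group sizes fixed and reassigns trajectories greedily.

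\textbf{Step 1.} Order the groups of the optimal partition by their longest member, so that $\max_{\tau\in g_{(1)}}L(\tau)\ge \max_{\tau\in g_{(2)}}L(\tau)\ge\cdots$. Let $s_i=|g_{(i)}|$.

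\textbf{Step 2.} Define the new groups $h_1,\ldots,h_m$ by consecutive blocks of the sorted list:
\[
h_i=\{\tau_{S_{i-1}+1},\ldots,\tau_{S_i}\},\qquad S_i=\sum_{j\le i}s_j .
\]
Each $h_i$ has the same size $s_i$ as $g_{(i)}$, so it has the same interference factor $F(s_i)$.

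\textbf{Step 3 (main step).} We must show that the longest member of $h_i$, namely $L(\tau_{S_{i-1}+1})$, is at most the longest member of $g_{(i)}$. Equivalently, the $(S_{i-1}+1)$-th longest trajectory overall is no longer than $\max g_{(i)}$.

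This follows by counting. Every trajectory strictly longer than $\max g_{(i)}$ cannot lie in $g_{(i)}$. It also cannot lie in $g_{(j)}$ for $j>i$, because those groups have maxima no larger than $\max g_{(i)}$. Hence all such trajectories lie in $g_{(1)},\ldots,g_{(i-1)}$, and there are at most $S_{i-1}$ of them. Therefore position $S_{i-1}+1$ in the sorted list holds a trajectory of length at most $\max g_{(i)}$.

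Ties in length require care in this count. We resolve them by using the same index tie-break in both the sorted order and the group ordering, so the argument is stated in terms of positions rather than raw lengths.

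\textbf{Step 4.} Combining Steps 2 and 3 gives $c(h_i)\le c(g_{(i)})$ for every $i$. Hence
\[
\max_i c(h_i)\le \max_i c(g_i),
\]
so $\{h_i\}$ is also optimal, and every $h_i$ is a contiguous subsequence of the sorted list.

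The only delicate point is Step 3, which is the counting or pigeonhole bound on prefix positions. A first alternative would be a pairwise swap argument that repeatedly repairs "crossing" pairs of groups, but that needs a termination potential. The direct size-preserving reconstruction avoids this.

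We should also note explicitly that the argument uses only two things: $F$ depends on group size alone, and workers are homogeneous. It does not actually need the monotonicity of $F$. That premise matters only later, for the DP recurrence over prefix splits.
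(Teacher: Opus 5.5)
Your proof is correct, but it takes a different route from the paper's.

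\textbf{The paper's route.} The paper uses a local pairwise exchange, written out only for two groups. With $\tau_1\in g_1$, suppose some $\tau_i\in g_2$ is longer than some $\tau_{min}\in g_1$; swap them. Sizes, and hence $F$, are unchanged. The maximum of $g_1$ stays $L(\tau_1)$, and the maximum of $g_2$ cannot grow. Repeating such swaps reaches a contiguous partition.

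\textbf{What the paper leaves implicit.} For $m=2$, termination is immediate, since each swap strictly increases the total length in $g_1$. The extension to $m$ groups and the termination of the swap process are only asserted (``generalizable to $m$'', ``iterative swaps eventually yield''). Making them rigorous needs either a potential function or an induction that first fixes the group containing the longest remaining trajectory and then recurses. That is exactly the bookkeeping you identify and sidestep.

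\textbf{What your route buys.} Your size-preserving block reconstruction, with the pigeonhole count in Step 3, handles all $m$ groups at once. It needs no termination argument. It also gives the stronger per-group domination $c(h_i)\le c(g_{(i)})$, not just a bound on the maximum. You make explicit that only the size-only dependence of $F$ and homogeneity are used, not monotonicity. The same is true of the paper's swap argument, which likewise relies only on sizes being invariant.

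\textbf{A small remark.} Your tie-breaking caveat is unnecessary. The trajectories strictly longer than $\max g_{(i)}$ occupy a prefix of every descending sort, so Step 3 already works with raw lengths and arbitrary tie order.
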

\noindent\textbf{\emph{Proof.}} Consider an optimal partition with two (generalizable to $m$)
groups $g_1$ and $g_2$, where $\tau_1 \in g_1$ and $L(\tau_1) \ge \dots \ge L(\tau_n)$.
Suppose the partition is non-contiguous. Then there exists $\tau_i \in g_2$ such that $L(\tau_i) > L(\tau_{min})$
for some $\tau_{min} \in g_1$. We swap $\tau_i$ and $\tau_{min}$. Because group sizes are invariant,
interference factors $F(g_1)$ and $F(g_2)$ remain unchanged according to the aforementioned premise.
In $g_1$, the makespan is still dictated by $\tau_1$ since $L(\tau_1) \ge L(\tau_i)$, so $g_1$'s completion time
is unchanged. In $g_2$, replacing $\tau_i$ with the shorter $\tau_{min}$ ensures the maximum trajectory length is non-increasing.
Consequently, the global rollout makespan is non-increasing.
Iterative swaps eventually yield a contiguous partition $\{g_1^*, g_2^*\}$ satisfying Lemma~\ref{lemma1}
without increasing the rollout makespan of the optimal original partition.

Leveraging this insight, we presort agentic trajectories by descending length and
enforce a contiguity constraint on partitioning, which guarantees
that the optimal solution derived under this constraint is globally optimal.

\parabf{Dynamic Programming.}
Given the presorted order, our problem becomes analogous to the linear partition problem~\cite{skiena2020algorithm}.
The contiguous partitioning constraint reduces the search space from the Stirling number of the
second kind, $S(n, m)$, to $\binom{n-1}{m-1}$. While significantly smaller, this magnitude remains
intractable for runtime enumeration. To address this, we propose a dynamic programming algorithm
that efficiently resolves the optimal partition solution.

\parait{\underline{State Initialization.}}
We define $dp[i][j]$ as the optimal makespan when partitioning the first $i$ trajectories across $j$ workers.
The initialization focuses on the single worker case ($j=1$).
For $i \in [1, n]$, we set $dp[i][1] = L(\tau_1) \times T \times F(\{\tau_1, \ldots, \tau_i\})$, with the boundary condition $dp[0][0] = 0$.
In this formulation, $L(\tau_1)$ represents the maximum trajectory length, $F$ captures the interference factor,
and $T$ denotes the base per-token time.

\parait{\underline{State Transition.}}
The state transition function is defined as:
\begin{equation}
    dp[i][j] = \min_{k=1}^{i-1} \max \left\{
    \begin{aligned}
        & dp[k][j-1], \\
        & L(\tau_{k+1}) \times T \times F(\{\tau_{k+1}, \ldots, \tau_i\})
    \end{aligned}
    \right\}
    \label{eq:placement:state-transition}
\end{equation}
Formula~\ref{eq:placement:state-transition} recursively identifies the optimal partition index $k$ separating
the $j$-th group from the preceding $j-1$ groups.
The variable $k$ iterates through feasible split positions (i.e., $k \in [1, i-1]$), assigning $\{\tau_1, \ldots, \tau_k\}$
to previous workers and $\{\tau_{k+1}, \ldots, \tau_i\}$ to the current $j$-th worker.
The inner $\max$ operator models the parallel execution nature, i.e., the global makespan is dictated by the maximum worker's makespan.
Crucially, due to descending order sorting, $L(\tau_{k+1})$ represents the dominant length in the current group.
Finally, the outer $\min$ operator selects the $k$ that minimizes the global makespan, ensuring load balancing across workers.

\begin{figure}[t!]
    \centering
    \includegraphics[width=1.03\linewidth]{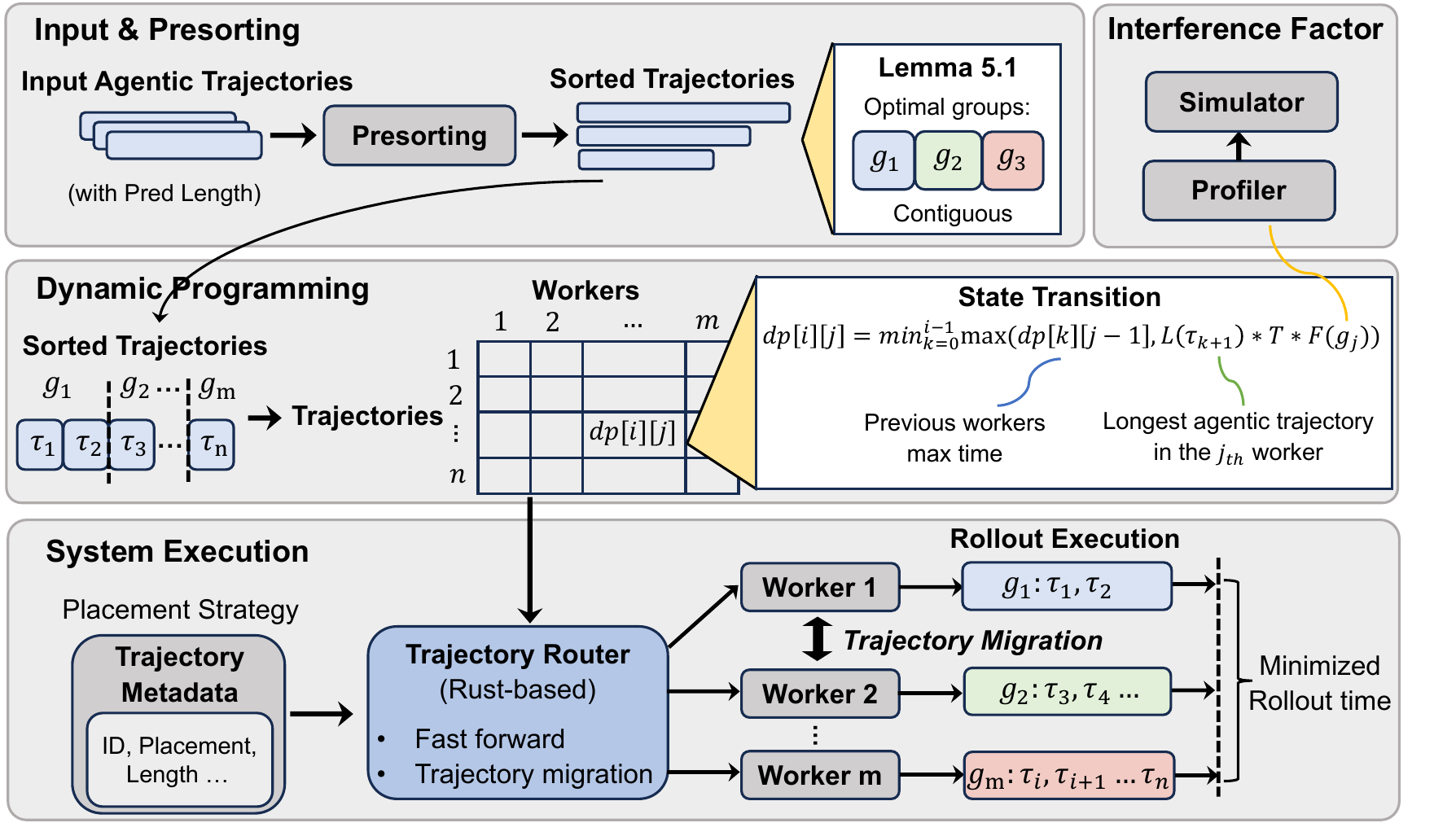}
    \vspace{-0.2in}
    \caption{Illustration of the placement algorithm.}
    \vspace{-0.2in}
    \label{fig:design:placement:illustration}
\end{figure}

The combination of presorting and dynamic programming yields a globally optimal solution based on Lemma~\ref{lemma1}.
Figure~\ref{fig:design:placement:illustration} illustrates the trajectory of this algorithm.
Its time complexity is $O(n^2 m)$, a drastic reduction from the combinatorial baseline $O(\binom{n-1}{m-1})$.
However, for large-scale workloads, $O(n^2 m)$ remains computationally substantial.
To further mitigate the overhead, we aggregate short trajectories below a defined threshold after sorting.
This heuristic reduces the effective input size $n$, significantly accelerating
execution with negligible impact on solution quality.

\parabf{Interference Factor.}
To derive the interference factor $F(g_i)$, we employ a profiler-based simulation approach.
We first construct a profiler to characterize per-token time across a spectrum of batch sizes.
These profiles feed into a simulator that models the concurrent execution of trajectories in $g_i$,
capturing batch-dependent interference and queuing delays.

\parabf{Agentic Trajectory Router.}
We implement a lightweight, Rust-based router to dispatch LLM generation requests.
This component maintains essential trajectory metadata, including placement assignments, predicted lengths,
and presorted ranks.
During initialization, the router synchronizes with the LLM rollout workers. Then, it ingests the partitioning
strategy from the control plane and routes each trajectory to its designated worker, strictly enforcing the
placement decisions derived from presorted dynamic programming.

\subsection{Trajectory Migration}
\label{sec:design:placement:trajectory-migration}
The presorted dynamic programming algorithm optimizes placement under the assumption
of perfect prediction accuracy. However, as noted in \S~\ref{sec:design:scheduler:progressive-trajectory-prediction},
prompt-only estimates suffer from inherent variance. Relying solely on this static assignment
exacerbates the resource contention for unanticipated long-tailed trajectories.
To mitigate this, we employ a runtime mechanism that adapts placement based on progressive prediction updates.

\parabf{Trajectory Migration Strategy.}
We employ trajectory migration to dynamically correct load imbalances. When a trajectory's predicted length
is updated, the trajectory router determines its new rank in the sorted order.
To avoid re-running the expensive dynamic programming algorithm, we scale the original partition sizes,
defined by each group's size $\{s_1, \ldots, s_m\}$ ($s_i =|g_i|$),
proportionally to the number of remaining active trajectories, $n^*$.
Specifically, the effective capacity of the $i$-th group becomes $s_i \times \frac{n^*}{n}$.
Using these scaled sizes, the router identifies the appropriate worker for the trajectory's new rank
and executes a migration if the target worker differs from the current host.

\parabf{KV Cache Migration.}
To mitigate the prefill overhead for long-tailed trajectories, we implement a live KV cache migration mechanism.
Upon reassignment, the router identifies the resident prefix cache and initiates a direct transfer to the target worker via GPU-Direct RDMA,
ensuring high-throughput and low-latency transmission. To manage concurrent migrations and prevent
endpoint contention, we implement a trajectory-aware transmission scheduler. The router prioritizes migration requests
in descending order of trajectory length. In each scheduling epoch, it greedily selects the longest available trajectory's migration request,
skipping any request that shares a source or destination worker with selected or running migration requests.
This transmission scheduler iteratively constructs batches of strictly parallel, non-conflicting migration requests.
By prioritizing long trajectories and enforcing endpoint exclusivity, this strategy maximizes bandwidth utilization 
while ensuring timely migration for critical long-tailed trajectories.

\section{Trajectory-adaptive Resource Manager}
\label{sec:design:resource-manager}
To reduce the base per-token time ($T$ in Formula~\ref{eq:makespan}) of long-tailed trajectories while preserving the high throughput of short
trajectories, we propose a trajectory-adaptive resource manager. We formally define the allocation problem and employ a
simulated annealing algorithm to efficiently converge to a near-optimal solution.

\subsection{Problem Formulation}
\label{sec:design:resource-manager:problem-formulation}
In \sysname's agentic rollout, we deploy multiple LLM workers with heterogeneous resource configurations
and model parallelism strategies. As illustrated in Figure~\ref{fig:design:resource-manager:insight}(b),
our core insight is to provision workers with higher degrees of model parallelism for long-tailed trajectories
(to minimize per-token time) while assigning lower degrees of model parallelism to short trajectories (to maximize throughput).
Formally, we allocate a total GPU budget $N = \sum_{i=1}^m N_i$ across $m$ workers with model parallelism (MP) degrees $\{N_1, \ldots, N_m\}$.
Transitioning to heterogeneous workers complicates the original placement problem formulation (\S~\ref{sec:design:placement:problem-formulation})
by requiring the joint optimization of trajectory partitions $\{g_i\}$ and MP allocations $\{N_i\}$.
To maintain tractability, we decouple it into two manageable subproblems, i.e., mapping and resource allocation.
solved via a two-stage heuristic: \emph{sort-initialized simulated annealing}.

\subsection{Sort-Initialized Simulated Annealing}
\label{sec:design:resource-manager:simulated-annealing}

\paraf{Mapping.}
The key insight for our mapping strategy is that the trajectory ordering established
in \S~\ref{sec:design:placement:presorted-dp} serves as a strong structural prior.
Specifically, the partitions $\{g_1, \ldots, g_m\}$ are inherently sorted in descending order
of their predicted lengths. Since our objective is to assign long-tailed trajectories to workers with higher model parallelism (to minimize per-token time),
we identically sort the worker resources $\{N_1, \ldots, N_m\}$
in descending order of their model parallelism degrees. We then deterministically assign the $i$-th
trajectory partition $g_i$ to the $i$-th worker possessing $N_i$ GPUs.

\begin{figure}[t!]
    \centering
    \includegraphics[width=0.86\linewidth]{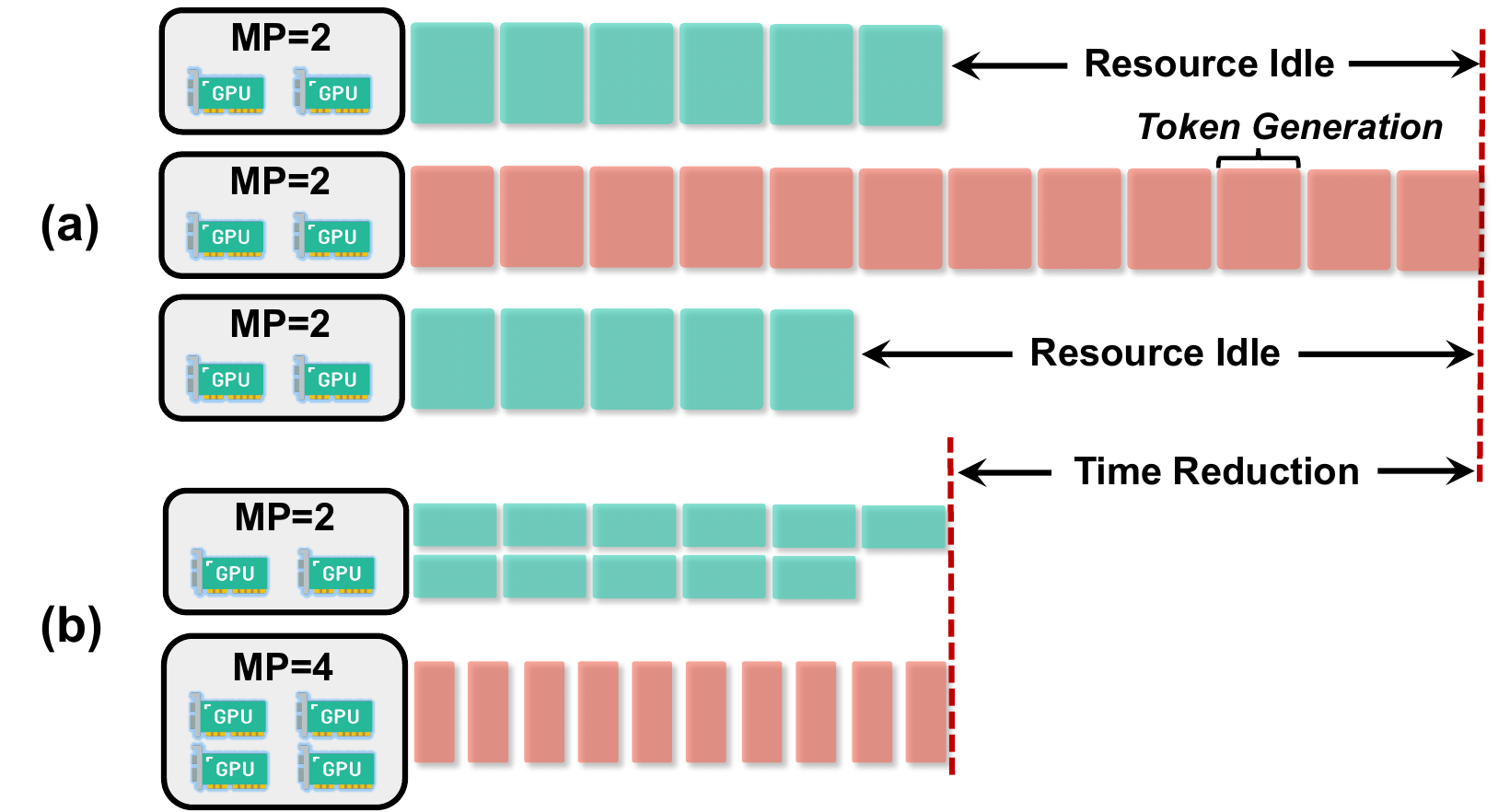}
    \vspace{-0.1in}
    \caption{Insight of heterogeneous resource allocation.}
    \vspace{-0.2in}
    \label{fig:design:resource-manager:insight}
\end{figure}

\parabf{Resource Allocation.}
Optimizing the GPU allocation $\{N_1, \ldots, N_m\}$ is critical, as each $N_i$ dictates
the per-token time for partition $g_i$ and the resulting global makespan (\S~\ref{sec:design:placement:problem-formulation}).
A naive baseline would exhaustively enumerate all valid allocations, evaluate the completion time
via the presorted dynamic programming (DP) algorithm for each, and select the global minimum. However, a single DP
execution requires $\mathcal{O}(m \cdot n^2)$ time, taking approximately $42$ milliseconds
for $n=6400$ and $m=16$ (\S~\ref{sec:evaluation:overhead}). Given the total GPU budget $N = \sum_{i=1}^m N_i$,
the number of valid allocations scales as $\binom{N-1}{m-1}$. Consequently,
an exhaustive search across this combinatorially explosive space is computationally intractable.

To efficiently navigate this space, we use a simulated annealing heuristic. We initialize the search
with a randomized, sorted allocation: each capacity $N_i$ is sampled from predefined model parallelism degrees,
and the resulting array is sorted in descending order.
We set the initial annealing temperature to the starting state's estimated completion time, computed by applying the
sort-initialized mapping and executing the presorted DP algorithm to determine trajectory placement.
During each search iteration, we perturb the current allocation by randomly applying one of
three state transitions: \emph{redistribution}, \emph{split}, or \emph{merge}.
While we unconditionally accept a proposed allocation that reduces overall completion time,
we also accept suboptimal states with a temperature-dependent probability to escape local optima.
The search terminates when the temperature drops below a predefined threshold $\epsilon$,
yielding the final configuration for the agentic rollout. Algorithm~\ref{alg:sa} details this pseudocode.

\begin{algorithm}[t]
\caption{Sort-Initialized Simulated Annealing}
\label{alg:sa}
\small
\begin{algorithmic}[1]
\Require Total GPU budget $N$, workers $m$, MP degrees $\mathcal{D}$, cooling rate $\alpha$, threshold $\epsilon$
\Ensure Optimal allocation $\{N_1^*, \ldots, N_m^*\}$
\State Sample $N_i \sim \mathcal{D}$ for $i \in [1, m]$ s.t. $\sum_i N_i = N$
\State Sort $\{N_1, \ldots, N_m\}$ in descending order \Comment{sort-initialized}
\State $C \gets \textsc{PresortedDP}(\{N_1, \ldots, N_m\})$ \Comment{initial makespan}
\State $T \gets C$; $C^* \gets C$; $\{N^*\} \gets \{N\}$
\While{$T > \epsilon$}
    \State $\{N'\} \gets \textsc{Perturb}(\{N\})$ \Comment{redistribute / split / merge}
    \State Sort $\{N_1', \ldots, N_m'\}$ in descending order
    \State $C' \gets \textsc{PresortedDP}(\{N_1', \ldots, N_m'\})$
    \State $\Delta \gets C' - C$
    \If{$\Delta < 0$ \textbf{or} $\text{rand}() < e^{-\Delta / T}$}
        \State $\{N\} \gets \{N'\}$; $C \gets C'$
    \EndIf
    \If{$C < C^*$}
        \State $C^* \gets C$; $\{N^*\} \gets \{N\}$
    \EndIf
    \State $T \gets \alpha \cdot T$
\EndWhile
\State \Return $\{N_1^*, \ldots, N_m^*\}$
\end{algorithmic}
\end{algorithm}

By coupling sort-initialized mapping with simulated annealing-based allocation,
we create a robust heterogeneous management heuristic.
While the former aligns long-tailed trajectories with high MP workers,
the latter navigates the combinatorial search space to optimize GPU provisioning.
Jointly, these mechanisms mitigate long-trajectory computation bottlenecks and preserve short-trajectory throughput,
ultimately minimizing the global rollout makespan.

\section{Evaluation}
\label{sec:evaluation}

We implement \sysname based on Verl~\cite{verl}, SGLang~\cite{zheng2024sglang}, and Ray~\cite{moritz2018ray}
with 15K lines of Rust, Python, and C++.
In our evaluation, we first evaluate the overall performance of \sysname
compared to state-of-the-art agentic RL systems, and then we conduct
ablation studies to evaluate the effectiveness of each component of \sysname.
In our ablation studies (\S~\ref{sec:evaluation:trajectory-level-scheduling}-\S~\ref{sec:evaluation:trajectory-adaptive-resource-manager}),
we evaluate each component individually while keeping all
other system components and experimental configurations identical to the overall performance evaluation.
Finally, we analyze the overhead of \sysname's key components in \S~\ref{sec:evaluation:overhead}.

\parabf{Testbed.}
We conduct our experiments on a cluster of eight servers,
totaling 64 GPUs. Each node is equipped with eight NVIDIA Hopper GPUs, 160 Intel Xeon CPU cores,
and 1.8 TB of host memory. Intra-node GPU communication utilizes 900 GB/s NVLink,
while inter-node networking is handled by 400 Gb/s NVIDIA Mellanox InfiniBand supporting GPUDirect RDMA.
Our system software stack includes PyTorch 2.8.0 and CUDA 12.2 (driver 535.161.08).
For distributed orchestration and messaging, we use Ray 2.49.0 and
ZMQ 4.3.5.

\begin{figure*}[t]
    \centering
    \includegraphics[width=1\linewidth]{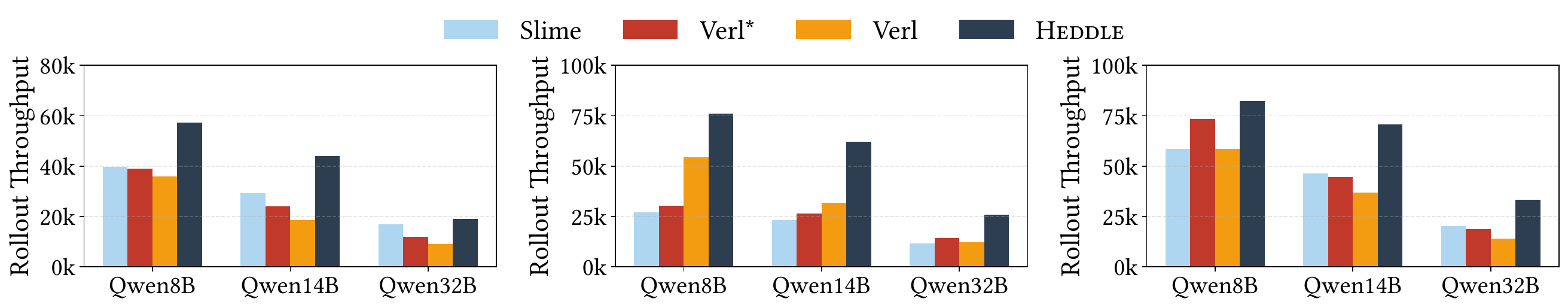}
    \footnotesize{\hspace*{0.04in}{(a) Coding agent.}\hspace*{\dimexpr\linewidth/4\relax}{(b) Search agent.}\hspace*{\dimexpr\linewidth/5\relax}{(c) Math agent.}}
    \vspace{-0.1in}
    \caption{Rollout throughput of agentic RL systems under different workloads and models.}
    \vspace{-0.05in}
    \label{fig:eval:overall}
\end{figure*}

\parabf{Models.}
We evaluate \sysname with instruction-tuned Qwen3~\cite{yang2025qwen3} models (8B, 14B, and 32B) possessing
foundational reasoning and tool-use capabilities. By benchmarking across these three parameter
scales, we evaluate \sysname's performance against baselines across a diverse spectrum of
computational intensities and resource budgets.

\parabf{Workloads.}
We evaluate \sysname across three representative agentic RL domains: coding, search, and math.
For the coding agent~\cite{yang2024swe}, we utilize the CodeForces dataset~\cite{li2022competition}, equipping the LLM
with a sandbox tool to execute code, run test suites, and check formatting.
The search agent~\cite{jin2025search} operates on the HotpotQA dataset~\cite{yang2018hotpotqa},
utilizing a web search tool~\cite{bing_search} for online information retrieval and multi-hop reasoning.
Lastly, the math agent~\cite{gou2023tora} leverages the DAPO-Math dataset~\cite{dapo_math_hf},
employing calculator and solver tools to resolve mathematical problems.
Across all agentic rollouts, we enforce a maximum output length of 40K tokens and
generate 16 samples per prompt using the GRPO~\cite{shao2024deepseekmath} RL algorithm. We configure LLM generation hyperparameters with a
temperature of $1.0$ and a top-$p$ of $0.9$, aligning with standard LLM post-training
practices~\cite{ouyang2022training,touvron2023llama}.

\parabf{Baselines.}
We compare \sysname against two state-of-the-art open-source agentic RL frameworks:
\begin{itemize}[leftmargin=*]
    \item \textbf{Slime}~\cite{slime_github} is an open-source RL training framework built on SGLang~\cite{zheng2024sglang}
    that employs a customized router to dispatch agentic trajectories. It achieves load balancing
    by dynamically routing individual LLM generation requests at each step to the least-loaded worker.
    \item \textbf{Verl}~\cite{verl} provides highly scalable agentic training abstractions
    alongside a hierarchical, hybrid programming model. To optimize prefix cache utilization,
    it utilizes a static, cache-aware placement strategy, pinning each complete agentic trajectory sample
    to a dedicated worker.
    \item \textbf{Verl*} extends Verl by integrating the SGLang-router~\cite{zheng2024sglang}.
    It employs a hybrid placement strategy: if the load skew (max/min)
    exceeds a threshold (e.g., 32), it uses least-load strategy;
    otherwise, it defaults to a cache-aware strategy.
\end{itemize}
Notably, both baselines inherently rely on round-robin scheduling and homogeneous resource allocation
during agentic rollout. Given that Slime is strictly coupled to SGLang,
we standardize our evaluation by configuring SGLang as the rollout backend for Verl as well to ensure a fair comparison.

\begin{figure*}[t]
    \centering
    \includegraphics[width=1\linewidth]{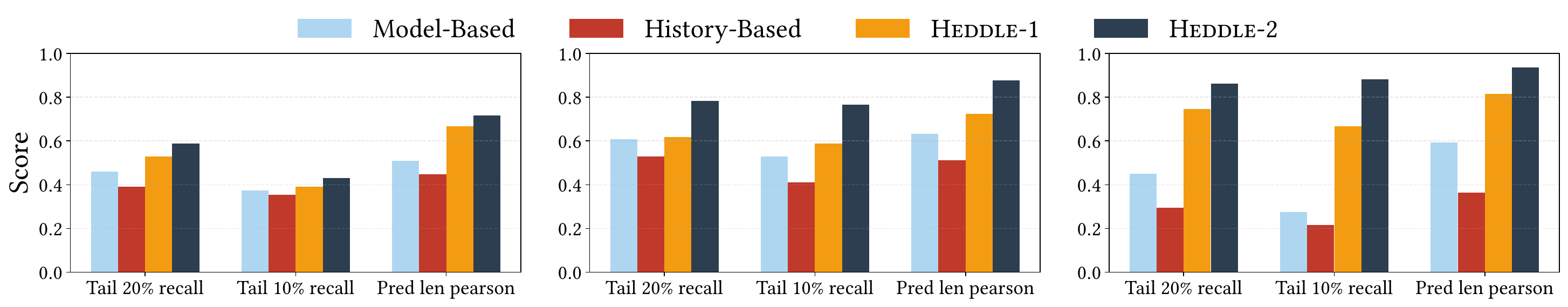}
    \footnotesize{\hspace*{0.04in}{(a) Coding agent.}\hspace*{\dimexpr\linewidth/4\relax}{(b) Search agent.}\hspace*{\dimexpr\linewidth/5\relax}{(c) Math agent.}}
    \vspace{-0.1in}
    \caption{Precision of progressive trajectory prediction under different workloads and models.}
    \vspace{-0.05in}
    \label{fig:eval:predictor}
\end{figure*}

\subsection{Overall Performance}
\label{sec:evaluation:overall}
We first benchmark \sysname against Slime and Verl. For the baseline systems,
which rely on homogeneous resource allocation, we configure the model parallelism degrees of
1, 1, and 2 for the Qwen3-8B, 14B, and 32B variants, respectively, to ensure maximum throughput.
We fix the baseline batch size at 100 per LLM rollout worker and apply the same global batch size
to \sysname. In contrast to the baselines, \sysname leverages the heterogeneous resource configurations
derived from our trajectory-adaptive manager (\S~\ref{sec:design:resource-manager}).
Given our focus on accelerating agentic RL rollouts, we use end-to-end rollout throughput (tokens/s)
as the performance metric.

Figure~\ref{fig:eval:overall} presents the end-to-end rollout throughput across various agentic RL tasks and model
sizes. \sysname outperforms both baselines, achieving speedups of $1.4\times$--$2.3\times$ over Verl, $1.1\times$--$2.4\times$ over Verl*, and $1.2\times$--$2.5\times$ over Slime.
Notably, these performance gains amplify as model size increases. This is because larger models inherently suffer from severe computation and memory
contention, which heightens the interference factor of the long-tail trajectories. This bottleneck significantly
degrades the baseline performance. \sysname, however, successfully mitigates this through its trajectory-aware placement. 
Comparing the baselines, Slime demonstrates superior efficiency on coding and math agent tasks by utilizing
a customized router to mitigate load imbalance across rollout workers. Conversely, Verl outperforms
Slime on the search agent due to the task's shorter sequences and frequent interaction steps,
which amplify prefill overhead and prioritize prefix cache locality.
In such cache-heavy scenarios, Verl's cache-aware placement proves more effective.
Verl* generally demonstrates intermediate performance between Verl and Slime,
representing a heuristic trade-off between cache affinity and load balance.
\sysname, however, implements a superior trajectory-aware placement strategy that simultaneously
maximizes prefix cache hit rates and minimizes interference from long-tailed trajectories,
outperforming both baselines across all task types.

\subsection{Effectiveness of trajectory-level Scheduling}
\label{sec:evaluation:trajectory-level-scheduling}
In this subsection, we evaluate the scheduler from two perspectives:
the precision of progressive trajectory prediction,
and the effectiveness of the progressive priority scheduling.

\parabf{Prediction.}
To assess the precision of the progressive trajectory prediction, we compare \sysname's predictor
against two prompt-based baselines: model-based and history-based prediction.
Model-based prediction~\cite{zhong2025streamrl} employs a lightweight deep learning model to infer prompt
complexity, while history-based prediction~\cite{he2025history,qin2025seer} estimates it with statistical heuristics from historical rollout data.
In contrast, \sysname leverages both the initial prompt and the runtime-generated context to
dynamically predict the complexity of an active agentic trajectory. For our evaluation metrics,
we report the recall of long-tailed trajectories (ranging from $0$ to $1$) to measure the precision
of relative trajectory length predictions, alongside the Pearson correlation coefficient
($-1$ to $1$) to quantify the relationship between predicted and actual trajectory lengths.

Figure~\ref{fig:eval:predictor} presents the prediction precision for the Qwen3-14B model across
three agentic tasks. Here, \sysname-1 and \sysname-2 denote the predictions following the first and second agentic steps,
respectively. Notably, \sysname consistently achieves a higher recall and Pearson correlation coefficient
than the baselines. Furthermore, \sysname-2 outperforms \sysname-1
since runtime-generated context enables progressively more accurate trajectory length predictions.

\parabf{Scheduling.}
To evaluate \sysname's progressive priority scheduling, we compare it against FCFS, Round-Robin (RR)
and Autellix~\cite{luo2025autellix}.
While existing agentic RL frameworks default to RR,
Autellix employs a shortest-job-first policy to prevent head-of-line blocking in online LLM agent serving.
We measure the rollout
time and the queueing delay of the longest-tailed trajectory using Qwen3-14B.
Because an agentic trajectory submits an LLM generation request at every agentic step,
we define the trajectory's queueing delay as the sum of the queueing delays incurred across all its steps.
As Figure~\ref{fig:eval:scheduler} shows, \sysname reduces end-to-end rollout
time by $1.1\times$--$1.26\times$ relative to the baselines.
This performance gain stems primarily from minimizing the queueing delays.
These results validate the effectiveness of our progressive priority scheduling.

\begin{figure}[t]
    \centering
    \includegraphics[width=1\linewidth]{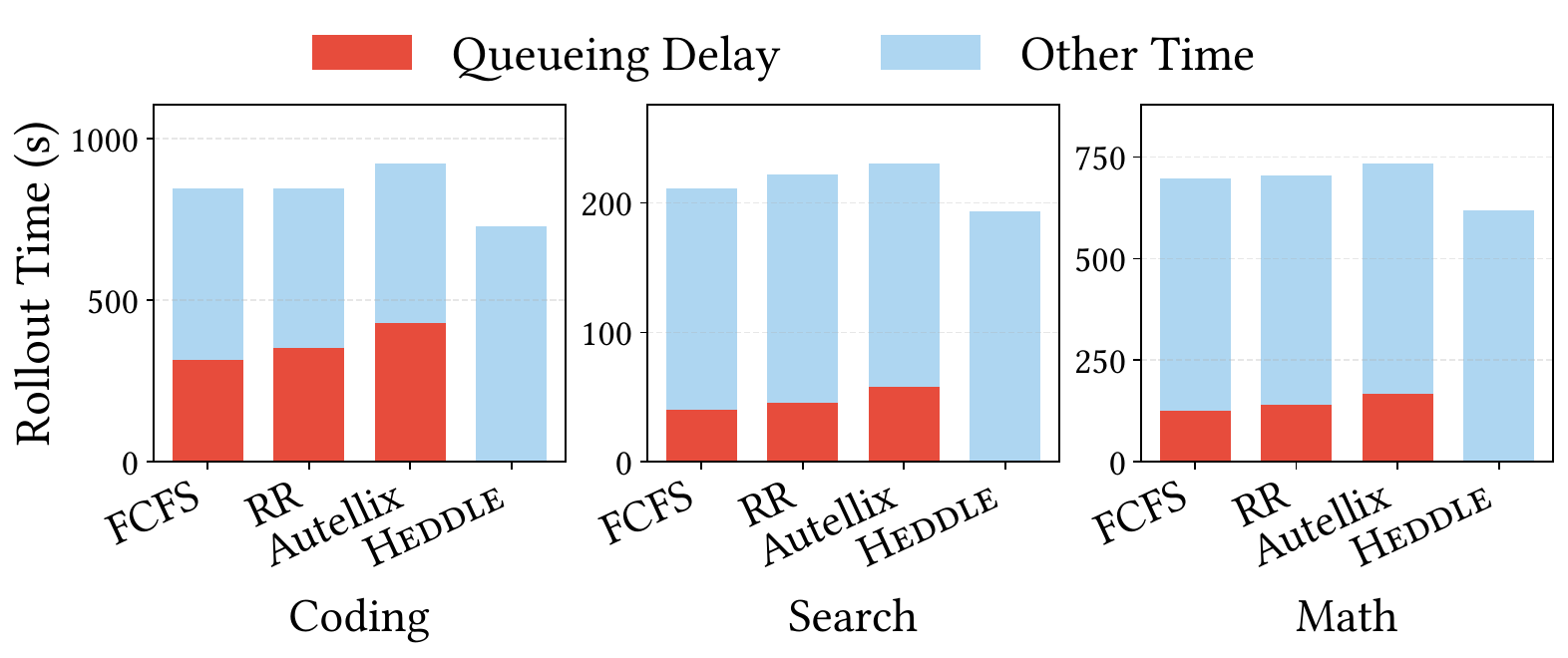}
    \vspace{-0.3in}
    \caption{Performance of trajectory-level scheduler.}
    \vspace{-0.15in}
    \label{fig:eval:scheduler}
\end{figure}

\subsection{Effectiveness of trajectory-aware Placement}
\label{sec:evaluation:trajectory-aware-placement}
To evaluate \sysname's trajectory-aware placement, we compare it against two baseline placement strategies:
\begin{itemize}[leftmargin=*]
    \item \textbf{Least-load} routes per-step LLM generation requests to the
    least-loaded worker if load imbalance exceeds a configurable threshold.
    Otherwise, it routes the request to the worker with the longest cached prefix.
    \item \textbf{Cache-aware} routes per-step LLM generation requests
    to the worker with the maximum prefix cache match, entirely disregarding potential load imbalances.
\end{itemize}
We evaluate the end-to-end rollout throughput using the Qwen3-14B model.
As Figure~\ref{fig:eval:dispatching} illustrates, \sysname achieves a $1.2\times$--$1.5\times$
higher throughput than both baselines.
While the cache-aware method maximizes prefix cache hits via static assignments,
the highly skewed output length of agentic trajectories induces severe load imbalance with this method.
The least-load method guarantees worker load balance while preserving
a degree of cache efficiency.
However, this method ignores the holistic agentic trajectory context, treating per-step requests in
isolation and incurring high interference for long-tailed trajectories.
\sysname overcomes these limitations by leveraging a trajectory-aware placement strategy, i.e.,
combining presorted dynamic programming with runtime trajectory migration,
to explicitly minimize the completion time of these critical long-tailed trajectories,
thereby reducing the overall rollout makespan.

\begin{figure}[t]
    \centering
    \includegraphics[width=0.78\linewidth]{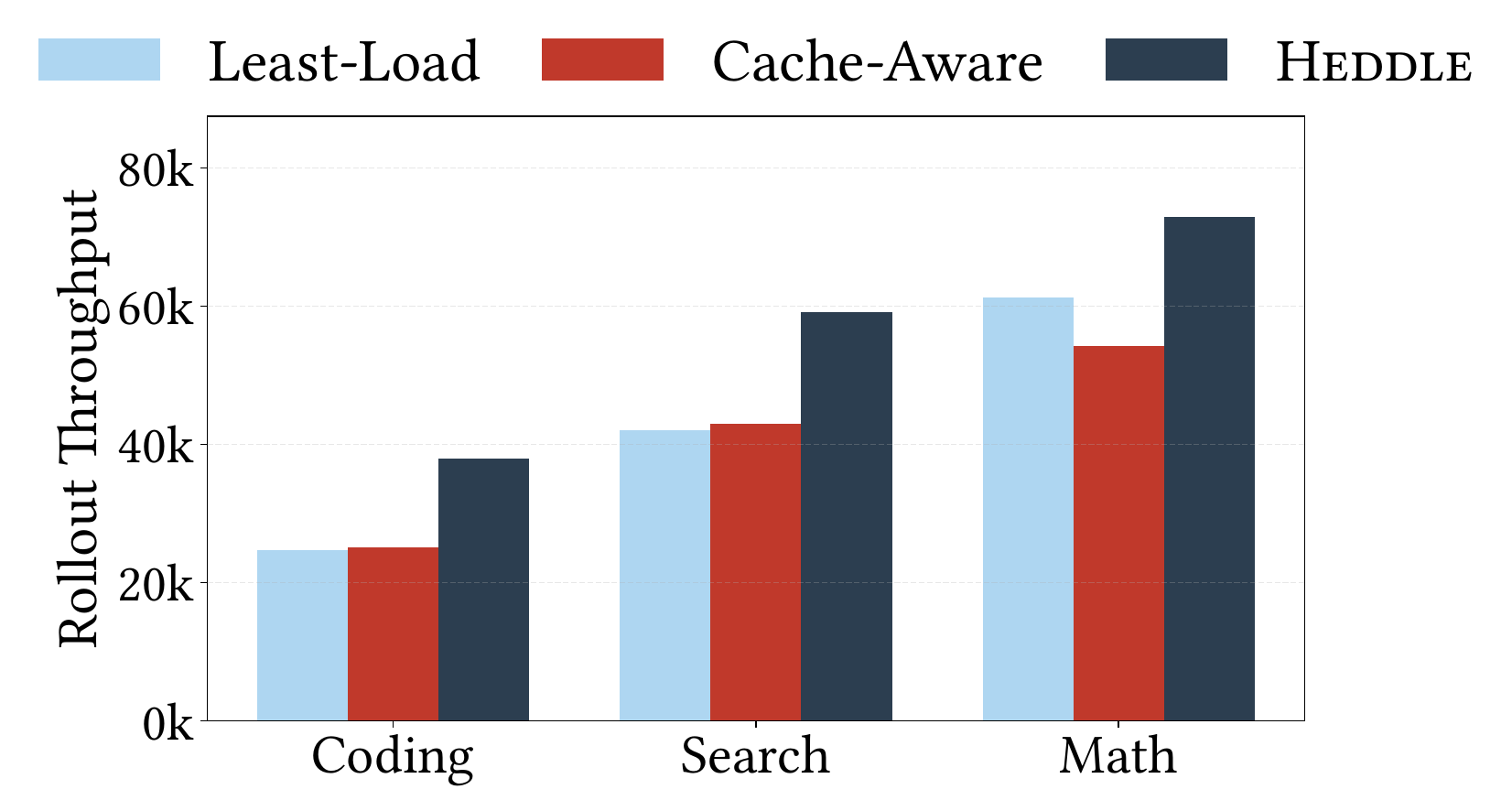}
    \vspace{-0.1in}
    \caption{Performance of trajectory-aware placement.}
    \vspace{-0.05in}
    \label{fig:eval:dispatching}
\end{figure}

\subsection{Effectiveness of Resource Manager}
\label{sec:evaluation:trajectory-adaptive-resource-manager}
To evaluate \sysname's trajectory-adaptive resource management,
we compare it against homogeneous resource allocation baselines,
where identical compute resources are assigned to each rollout worker.
We evaluate model parallelism sizes of $1$ (Fix-1) and $8$ (Fix-8),
representing throughput-optimized and latency-optimized configurations, respectively.
We measure the end-to-end rollout throughput with the Qwen3-14B model. Figure~\ref{fig:eval:resource}(a)
demonstrates that \sysname achieves a $1.1\times$--$1.3\times$ speedup over both static baselines.
Figure~\ref{fig:eval:resource}(b) tracks the number of active trajectories over time for the search agent.
While Fix-1 delivers peak initial throughput, its slow per-token time of long-tailed trajectories
ultimately bottlenecks the entire rollout. Conversely, Fix-8 minimizes per-token time for long-tailed trajectories
at the tail end of execution, but its persistently low overall throughput
throttles the rollout. \sysname resolves this \emph{performance trade-off} by dynamically
achieving low-latency for long-tailed trajectories while sustaining high throughput
for short trajectories, thereby outperforming both baselines.

\begin{figure}[t]
    \centering
    \includegraphics[width=1\linewidth]{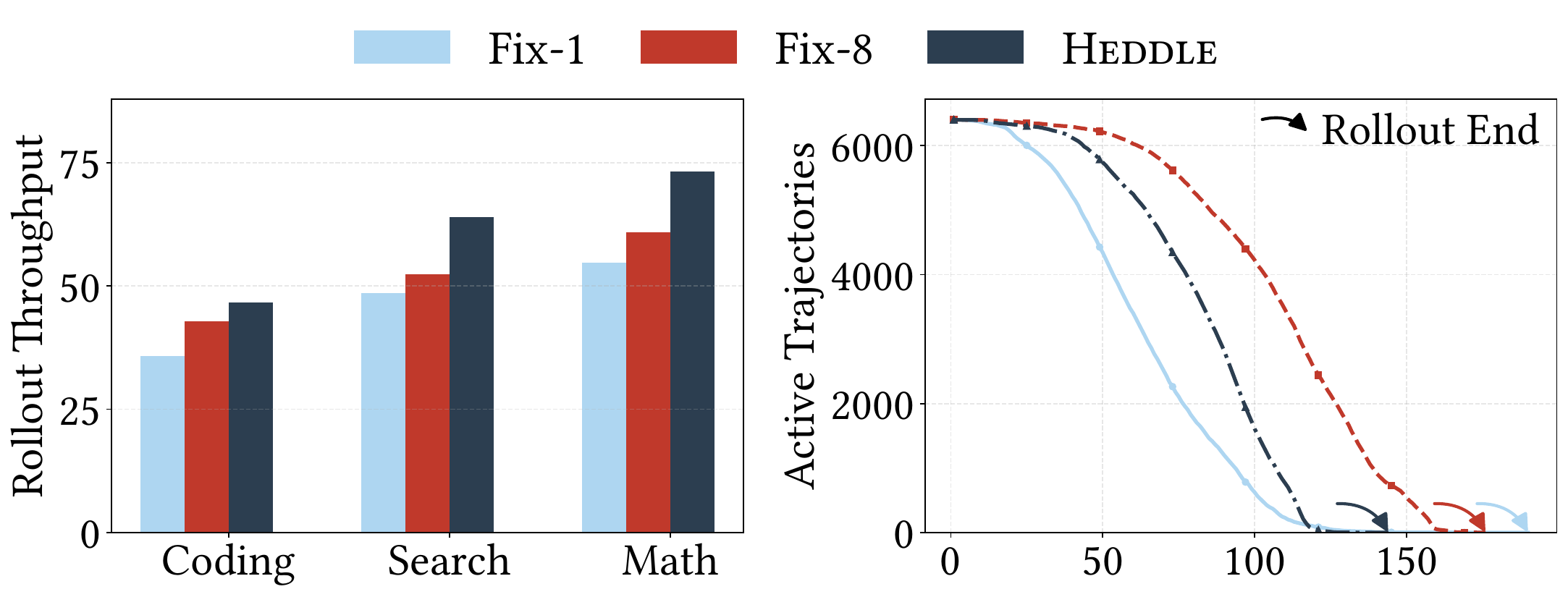}
    \footnotesize{\hspace*{0.2in}{(a) Rollout throughput.}\hspace*{\dimexpr\linewidth/8\relax}{(b) Active trajectories over time.}}
    \vspace{-0.15in}
    \caption{Performance of trajectory-adaptive resource management.}
    \vspace{-0.2in}
    \label{fig:eval:resource}
\end{figure}

\subsection{System Overhead}
\label{sec:evaluation:overhead}
\paraf{Data Plane.}
In this subsection, we first quantify the system overhead introduced by progressive trajectory
prediction (\S~\ref{sec:design:scheduler:progressive-trajectory-prediction})
and runtime trajectory migration (\S~\ref{sec:design:placement:trajectory-migration}).
Prediction overhead represents the latency of trajectory-length estimation,
while migration overhead accounts for the time required to transfer a trajectory's prefix cache between workers.
Notably, \sysname overlaps these two operations with standard tool execution.
Table~\ref{tab:eval:sys-overhead} details
the average tool execution times alongside the prediction and migration
overheads across various workloads.
As \sysname parallelizes these tasks, both overheads are effectively masked by tool execution
in most scenarios. Even when tool execution is exceptionally brief,
the exposed overhead remains negligible relative to the tens of seconds for per-step LLM generation.

\parabf{Control Plane.}
We next quantify the algorithmic overheads of placement (\S~\ref{sec:design:placement}) and resource management (\S~\ref{sec:design:resource-manager})
executed by the Ray controller worker. As Table~\ref{tab:eval:algorithm-overhead} shows,
placement overhead is negligible compared to the hundreds of seconds required for total rollout.
Furthermore, the resource management algorithm executes only periodically, amortizing its
cost across multiple training steps and rendering its system impact inconsequential.

\begin{table}[t!]
    \centering
    \resizebox{0.76\linewidth}{!} {
    \begin{tabular}{llccc}
        \toprule
        \textbf{Models} & & \textbf{Coding} & \textbf{Search} & \textbf{Math} \\
        \midrule
        \multirow{3}{*}{Qwen3-8B}
            & Tool Exec. (s)  & 0.46  & 1.42  & 0.051 \\
            & Pred. (s)       & 0.103 & 0.15  & 0.26  \\
            & Migration (s)  & 0.26  & 0.12  & 0.24  \\
        \midrule
        \multirow{3}{*}{Qwen3-14B}
            & Tool Exec. (s)  & 0.41  & 1.41  & 0.046 \\
            & Pred. (s)       & 0.099 & 0.15  & 0.27  \\
            & Migration (s)  & 0.27  & 0.15  & 0.25  \\
        \midrule
        \multirow{3}{*}{Qwen3-32B}
            & Tool Exec. (s)  & 0.45  & 1.43  & 0.054 \\
            & Pred. (s)       & 0.104 & 0.16  & 0.28  \\
            & Migration (s)  & 0.35  & 0.27  & 0.33  \\
        \bottomrule
    \end{tabular}
    }
    \vspace{0.1in}
    \caption{Prediction and migration overhead in \sysname.}
    \vspace{-0.4in}
    \label{tab:eval:sys-overhead}
\end{table}

\section{Discussion}
\label{sec:design:discussion}

\paraf{Asynchronous RL.}
Asynchronous RL~\cite{zhong2025streamrl} enhances RL throughput via partial rollout~\cite{team2025kimi} but requires staleness
thresholds to prevent gradient bias and preserve training convergence.
While this maximum staleness requirement maintains training stability, it does not
resolve the bottleneck of long-tailed agentic trajectories. \sysname can be seamlessly integrated
with staleness-bounded asynchronous RL, accelerating these stragglers without
introducing additional policy divergence or policy staleness.

\parabf{PD Disaggregation.}
Prefill-decode (PD) disaggregation~\cite{zhong2024distserve,patel2024splitwise} optimizes LLM serving by
applying heterogeneous model parallelism (MP) across prefill and decode stages.
However, it maintains homogeneous MP within each stage's worker pool.
\sysname is orthogonal to this approach and can be seamlessly integrated to provide intra-stage
heterogeneity. For instance, \sysname can dynamically assign higher
model parallelism to long-tailed trajectories within the prefill pool,
further accelerating agentic rollouts.

\parabf{Speculative Decoding.}
Speculative decoding (SD)~\cite{leviathan2023fast, hu2025taming} accelerates LLM decoding by using a small draft model
to generate candidate tokens, which are then verified in parallel by the target model.
However, SD degrades throughput at large batch sizes due
to the computational overhead of the rejected tokens.
SD is ill-suited for prefill-heavy agentic RL tasks where short generations across
frequent steps make tool-output prefill the dominant rollout bottleneck.
But \sysname is still effective by optimizing the prefill phase.
For decoding-heavy workloads, \sysname is orthogonal to SD and can be seamlessly integrated.
For instance, \sysname can dynamically route long-tailed trajectories to high model parallelism workers to reduce decoding time,
while concurrently applying SD to further minimize the decoding time.

\begin{table}[t!]
    \centering
    \resizebox{0.82\linewidth}{!} {
    \begin{tabular}{llccc}
        \toprule
        \textbf{Models} & & \textbf{Coding} & \textbf{Search} & \textbf{Math} \\
        \midrule
        \multirow{2}{*}{Qwen3-8B}
            & Placement (s)  & 0.036 & 0.037 & 0.036 \\
            & Resource manager (s)       & 5.69 & 5.27 & 5.06 \\
        \midrule
        \multirow{2}{*}{Qwen3-14B}
            & Placement (s)  & 0.038 & 0.037 & 0.036 \\
            & Resource manager (s)       & 5.05 & 4.99 & 4.97 \\
        \midrule
        \multirow{2}{*}{Qwen3-32B}
            & Placement (s)  & 0.037 & 0.038 & 0.036 \\
            & Resource manager (s)       & 5.06 & 5.01 & 4.98 \\
        \bottomrule
    \end{tabular}
    }
    \vspace{0.1in}
    \caption{Algorithm overheads in \sysname.}
    \vspace{-0.4in}
    \label{tab:eval:algorithm-overhead}
\end{table}
\section{Related Work}
\label{sec:related}

\paraf{LLM Training.}
LLM development bifurcates into pre-training and post-training phases with distinct computational
demands. Pre-training systems~\cite{zhang2025disttrain,ge2025bytescale,jiang2024megascale} optimize static throughput via multidimensional parallelism
to maximize FLOP utilization on massive corpora.
In contrast, post-training paradigms introduce dynamic workloads that alternate between
multi-step trajectory generation (inference) and policy optimization (training).
To support this, existing frameworks adopt either \emph{colocated} or
\emph{disaggregated} architectures. Colocated systems~\cite{verl,zhong2025optimizing,hu2025taming} execute both
generation and training on the same worker pool. 
Disaggregated systems~\cite{zhong2025streamrl,slime_github,fu2025areal} physically separate inference and training
onto dedicated and even heterogeneous worker pools.
\sysname is orthogonal to these existing RL frameworks and can be seamlessly integrated to provide
efficient agentic trajectory rollout.

\parabf{LLM Inference.}
Extensive LLM inference research focuses on optimizing LLM serving and RL rollout efficiency.
In scheduling, frameworks~\cite{wu2023fast,yu2022orca,luo2025autellix} employ iterative scheduling and selective batching to
mitigate head-of-line blocking. For memory management, many systems~\cite{kwon2023efficient,jin2024ragcache,qin2024mooncake,yao2025cacheblend,wu2025tokenlake} leverage
PagedAttention~\cite{kwon2023efficient} to eliminate memory fragmentation and facilitate prefix caching.
Architecturally, DistServe~\cite{zhong2024distserve} and MegaScale-Infer~\cite{zhu2025megascale}
disaggregate compute resources to isolate interference between distinct computation phases.
\sysname is orthogonal to these foundational techniques and incorporates
these step-level optimizations while introducing trajectory-level metadata
to orchestrate efficient agentic rollout.

\section{Conclusion}
\label{sec:conclusion}

We present \sysname, a trajectory-centric framework that optimizes the agentic RL
rollout phase by mitigating the straggler effect caused by long-tailed, multi-step agentic trajectories.
By decomposing the long-tail trajectory time into queueing, interference, and base per-token time,
\sysname employs a synergistic orchestration of trajectory-centric scheduling,
placement, and resource management. Our evaluation demonstrates
that \sysname effectively neutralizes long-tail bottlenecks, achieving up to 2.5$\times$ higher
throughput than state-of-the-art baselines.

\def\UrlBreaks{\do\/\do-}
\bibliographystyle{ACM-Reference-Format}
\bibliography{xin}

\clearpage

\end{sloppypar}
\end{document}